\newtheorem{definition}{Definition}
\newtheorem{theorem}{Theorem}
\DeclareMathOperator*{\argmax}{argmax}
\title{
Relaxing Local Robustness 
}
\author{%
  Klas Leino \\
  Carnegie Mellon University\\
  \texttt{kleino@cs.cmu.edu} \\
  \And
  Matt Fredrikson \\
  Carnegie Mellon University\\
  \texttt{mfredrik@cs.cmu.edu} \\
}
\begin{document}

\maketitle

\begin{abstract}

Certifiable \emph{local robustness}, which rigorously precludes small-norm \emph{adversarial examples}, has received significant attention as a means of addressing security concerns in deep learning. 
However, for some classification problems, local robustness is not a natural objective, even in the presence of adversaries; for example, if an image contains two classes of subjects, the correct label for the image may be considered arbitrary between the two, and thus enforcing strict separation between them is unnecessary. 
In this work, we introduce two relaxed safety properties for classifiers that address this observation: (1) \emph{relaxed top-k robustness}, which serves as the analogue of top-k accuracy; and (2) \emph{affinity robustness}, which specifies which sets of labels must be separated by a robustness margin, and which can be $\epsilon$-close in $\ell_p$ space. 
We show how to construct models that can be efficiently certified against each relaxed robustness property, and trained with very little overhead relative to standard gradient descent. 
Finally, we demonstrate experimentally that these relaxed variants of robustness are well-suited to several significant classification problems, leading to lower rejection rates and higher certified accuracies than can be obtained when certifying ``standard'' local robustness.

\end{abstract}

\section{Introduction}
\label{sec:intro}


The discovery of \emph{adversarial examples}~\cite{goodfellow15adv_example,PapernotMJFCS16,SzegedyZSBEGF13} has led to security concerns in deep learning.
A growing body of work has sought to address this problem by providing provable guarantees that a model's predictions are robust to small-norm perturbations~\cite{cohen19smoothing,croce19mmr,fromherz20fgp,jordan19geocert,lee20local_margin,leino21gloro,raghunathan2018certified,tjeng19mip,xiao19relu_stability,NEURIPS2018_358f9e7b,zhang18crown,katz19acas_cert}.
This objective is typically captured by ensuring that a model satisfies point-wise \emph{local robustness}; i.e., given a point, $x$, the model's predictions must remain invariant over the $\epsilon$-ball centered at $x$.
Local robustness is accompanied by a metric, \emph{verified-robust accuracy} (VRA), which corresponds to the fraction of points that are both correctly classified and locally robust.

In some contexts, however, it is not always clear that VRA is the most desirable objective or the most natural metric for measuring a model's success against adversaries.
For example, in some contexts, \emph{not all adversarial examples are equally bad}---this may reflect simply that a mistake is understandable, even if it was caused by an adversary (e.g., if a model mistakenly predicts an image of a leopard to be a jaguar); or that the correct label may be arbitrary in certain cases (e.g., if an image contains two classes of subjects).

For similar sorts of reasons, some computer vision tasks often use \emph{top-$k$ accuracy} as a benchmark metric that relaxes standard accuracy, allowing a prediction to be considered correct so long as the correct label appears among the model's $k$ highest logit outputs.
In many applications, top-$k$ accuracy is considered a more suitable metric/objective than standard top-1 accuracy~\cite{berrada18top_k_loss}; meanwhile, studied robustness properties are typically only defined with respect to a \emph{single} predicted class.
Thus, as part of this work, we introduce an analogous relaxation of local robustness to top-$k$ accuracy, which we call \emph{relaxed top-$K$ (RTK) robustness} (Section~\ref{sec:rtk}, Definition~\ref{def:rtk}).
Moreover, we demonstrate how a neural network can be instrumented in order to naturally incorporate certifiable RTK robustness into its learning objective, making runtime certification of this property essentially free (Section~\ref{sec:impl}).

In addition to applying to only top-1 predictions, standard robustness also focuses specifically on the problem of \emph{undirected} adversarial examples.
More concretely, adversarial examples are obtained broadly via two categories of attacks: (1) \emph{evasion} (undirected) attacks, and (2) \emph{targeted} (directed) attacks.
Local robustness provides guarantees against the former, while the latter, which requires only a weaker guarantee, has remained largely unexplored by work on certifiable robustness.
In this work we introduce \emph{affinity robustness} (Section~\ref{sec:affinity}, Definition~\ref{def:affinity}), which captures resistance to specified sets of directed adversarial attacks.
We show that certifiable affinity robustness can also be achieved in a manner similar to RTK robustness (Section~\ref{sec:impl}).

In recent work, \citet{leino21gloro} introduced a concept of $\epsilon$-\emph{global-robustness}, which can be thought of as requiring regions of the input space that are labeled as different classes to be separated by a margin of $\epsilon$, with any interstitial space labeled as $\bot$, signifying rejection.
By relaxing the notion of robustness, we essentially allow certain classes to be grouped together without any intervening rejected space.
This gives rise to a spatially-arranged hierarchy of classes that is conveyed through the robustness guarantee (see Figure~\ref{fig:boundary_comparison} in Section~\ref{sec:rtk} for an example).
Interestingly, we find that models trained with the objective of RTK robustness often form a logical hierarchy even without supervision.
Additionally, affinity robustness provides a way to supervise the hierarchy that forms, giving finer control over the spatial relationships between the various labels in the model's decision surface.


In summary, the primary contributions in this work are as follows:
\begin{itemize}
\item
	We introduce \emph{RTK robustness}, a notion of model robustness compatible with top-$k$ accuracy.
\item
	We introduce \emph{affinity robustness}, a notion of model robustness that captures resistance to particular sets of \emph{directed} adversarial attacks.
\item
	We show how to construct models that can be \emph{efficiently certified} against these two properties. 
\item
	We show that applying these relaxed robustness variants to suitable domains leads to certifiable models with lower rejection rates and higher certified accuracy.
\item
	We show that these relaxed robustness variants lead to interesting properties regarding how a network's prediction space is laid out, and that this layout of classes can be supervised to impart \emph{a priori} hierarchies.
\end{itemize}

\section{Related Work}
\label{sec:related}


Robustness certification for deep networks has become a well-studied topic, and numerous certification approaches have been proposed~\cite{cohen19smoothing,croce19mmr,fromherz20fgp,jordan19geocert,lee20local_margin,leino21gloro,raghunathan2018certified,tjeng19mip,xiao19relu_stability,NEURIPS2018_358f9e7b,zhang18crown,katz19acas_cert}.
However, virtually all of this work has focused specifically on certifying guarantees against $\ell_p$-norm-bounded, \emph{undirected} attacks on a model's \emph{top} prediction.
By contrast, we introduce two relaxed variants of this standard robustness property that have not formerly been studied, which we subsequently show how to certify.
Although our proposed definitions are agnostic to the method used to certify them, we focus on \emph{deterministic} robustness certification (as opposed to \emph{stochastic} certification, e.g., Randomized Smoothing~\cite{cohen19smoothing,lecuyer18smoothing}).
We build our certification approach from a recently proposed method by \citet{leino21gloro}, which has been demonstrated to be among the most scalable state-of-the-art methods for deterministic certification.

Much of this work focuses on generalizing the most commonly-used threat model characterizing what it means to be ``robust'', namely, $\ell_p$ local robustness, which stipulates prediction invariance over perturbations within a small $\ell_p$ ball.
Other work has also considered generalizations of this robustness definition, though not typically in the context of certification.
E.g., previous literature has proposed invariance under unions of multiple $\ell_p$ balls~\cite{croce19multiple_norms,tramer19multiple_norms}, invariance to rotations and translations~\cite{engstrom17rotation}, and invariance to inconspicuous, physically-realizable perturbations~\cite{brown17patch,evtimov17physical,kurakin16physical,sharif16glasses}.

The first of the two robustness definitions we propose, RTK robustness, is closely related to top-$k$ accuracy.
Recently, \citet{jia20flop_k} also proposed a definition that aims to capture certified robustness for top-$k$ predictions.
However, their work differs from ours in two key ways.
First, their certification method is derived from Randomized Smoothing, which gives a stochastic guarantee and relies on hundreds of thousands of samples for conclusive certification.
More importantly, \citeauthor{jia20flop_k} evaluate their property with respect to the \emph{ground truth} class of the point in question, making it unclear how one might certify this property in such a manner on unlabeled points, e.g., those seen by the model in deployment.
We provide more discussion on this problem and how our work addresses it in Appendix~\ref{appendix:flop_k} in the supplementary material.
As we will see, devising a robustness analogue to top-$k$ accuracy in a manner that relaxes standard robustness and does not depend on the ground truth is a subtle task---we address this issue carefully in Section~\ref{sec:rtk}.

Our second proposed robustness definition, affinity robustness, is related to targeted adversarial attacks~\cite{SzegedyZSBEGF13}, which have only rarely been studied in the context of certifiable defenses~\cite{gopinath18targeted}.


\section{Relaxed Top-K Robustness}
\label{sec:rtk}





In this work, we consider relaxations of the standard notion of local robustness (Definition~\ref{def:local_rob}) that may serve as a better learning objective and evaluation metric in some contexts, e.g., those where  not all adversarial examples are equally bad.

\begin{definition}[Local Robustness]
\label{def:local_rob}
A model, $F$, is $\epsilon$-locally-robust at point, $x$, w.r.t. norm, $||\cdot||$, if
$$
\forall x'~~.~~
||x - x'|| \leq \epsilon ~\Longrightarrow~ F(x) = F(x')
$$
\end{definition}


We begin in this section by introducing a notion of robustness that is inspired by the relaxed accuracy metric, top-$k$ accuracy.
Top-$k$ accuracy is a common benchmark metric in vision applications such as Imagenet, where the class labels are particularly fine-grained, and may even be arbitrary on some instances.
Furthermore, top-$k$ accuracy has been studied as a learning objective in its own right~\cite{berrada18top_k_loss}, and has been identified as desirable in the context of robustness certification~\cite{jia20flop_k}.

In order to create a notion of relaxed robustness that is analogous to top-$k$ accuracy, we will consider the network to output a \emph{set} of classes rather than a single class.
Let us define the following notation representing the set of the top $k$ outputs of a model:
let $f$ be the function computing the logit values of a neural network, and let $f^k(x)$ be the $k^\text{th}$-highest logit output of $f$ on $x$.
We then define $F^k(x) = \{j : f_j(x) \geq f^k(x)\}$, that is, $F^k$ is the set of classes corresponding to the top $k$ outputs of $f$.
Using this notation, we define \emph{top-$k$ robustness} (Definition~\ref{def:top-k}), which requires that the set of classes with the $k$ highest logits remains invariant over small-norm perturbations.
We note that if we had a single class of interest, $c$, e.g., the ground truth, we could simply require that $c$ remain in $F^k$ under small perturbations~\cite{jia20flop_k}; however, top-$k$ accuracy is useful precisely because \emph{any} of the classes in $F^k$ could be correct, meaning that all classes in $F^k$should be treated equally and guarded against perturbations.


\begin{definition}[Top-k Robustness]
\label{def:top-k}
A model, $F$, is top-$k$ $\epsilon$-locally-robust at point, $x$, w.r.t. norm, $||\cdot||$, if
$$
\forall x'~~.~~
||x - x'|| \leq \epsilon ~\Longrightarrow~ F^k(x) = F^k(x')
$$
\end{definition}

\begin{figure}[t]
\centering
	\begin{subfigure}[t]{0.22\textwidth}
		\centering
		\resizebox{1.0\textwidth}{!}{%
			\includegraphics{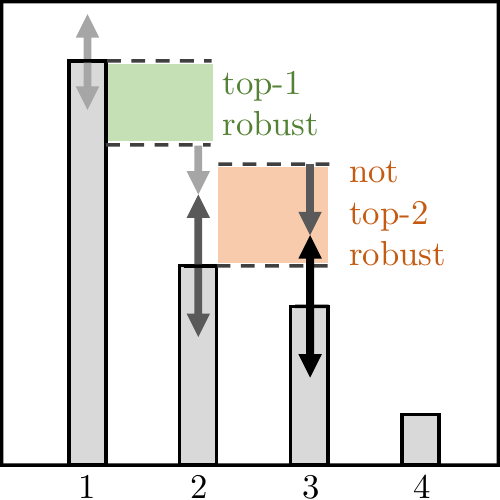}
		}%
		\caption{}
		\label{fig:not_a_relaxation}
	\end{subfigure}\hspace{5em}
	\begin{subfigure}[t]{0.22\textwidth}
		\centering
		\resizebox{1.0\textwidth}{!}{%
			\includegraphics{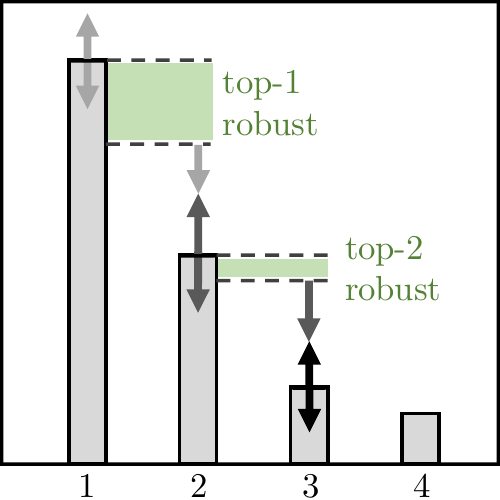}
		}%
		\caption{}
		\label{fig:top1_and_top2}
	\end{subfigure}
	\caption{
		\textbf{(\subref{fig:not_a_relaxation})} Example of network outputs that demonstrate top-$k$ robustness is not a relaxation of standard local robustness.
		The gray arrows denote bounds on the amount each logit can change within a radius of $\epsilon$.
		We see that these bounds are not sufficient for class 2 to surpass class 1; however, the bounds are sufficient for class 3 to surpass class 2.
		Therefore, the point in this example is top-$1$ robust, but \emph{not} top-$2$ robust.
		\textbf{(\subref{fig:top1_and_top2})} Example of network outputs that are simultaneously top-1 robust and top-2 robust.
	}
\end{figure}

While top-$k$ robustness may appear to capture the idea of top-$k$ accuracy, we observe that the analogy fails, as top-$k$ robustness is \emph{not a relaxation} of standard local robustness.
For example, if a model is top-$1$ \emph{accurate}, it is also top-$2$ accurate; however, if a model is top-$1$ \emph{robust}, it may not be top-$2$ robust.
Figure~\ref{fig:not_a_relaxation} provides an example illustrating this point.


We thus turn our attention to a modification of Definition~\ref{def:top-k} that \emph{does} relax standard local robustness.
Definition~\ref{def:rtk} provides a notion of what we call \emph{relaxed} top-$K$ robustness, or RTK robustness, which is properly analogous to the concept of top-$k$ accuracy.
Essentially, a point is considered RTK robust if it is top-$k$ robust for some $k$ in $\{1,\dots, K\}$.

\begin{definition}[Relaxed Top-K Robustness]
\label{def:rtk}
A model, $F$ , is relaxed-top-$K$ (RTK) $\epsilon$-locally-robust at point, $x$, w.r.t. norm, $||\cdot||$, if
$$
\forall x'~~.~~
||x - x'|| \leq \epsilon ~\Longrightarrow~ \exists~k \leq K : F^k(x) = F^k(x')
$$
\end{definition}

From the definition, it is clear that RTK robustness is a relaxation of standard local robustness:
first, RT1 robustness is equivalent to top-$1$ robustness, which is equivalent to local robustness;
second, RT1 robustness implies RTK robustness for $K > 1$.


\begin{figure}[t]
\centering
	\begin{subfigure}[t]{0.25\textwidth}
	\resizebox{\textwidth}{!}{%
		\includegraphics{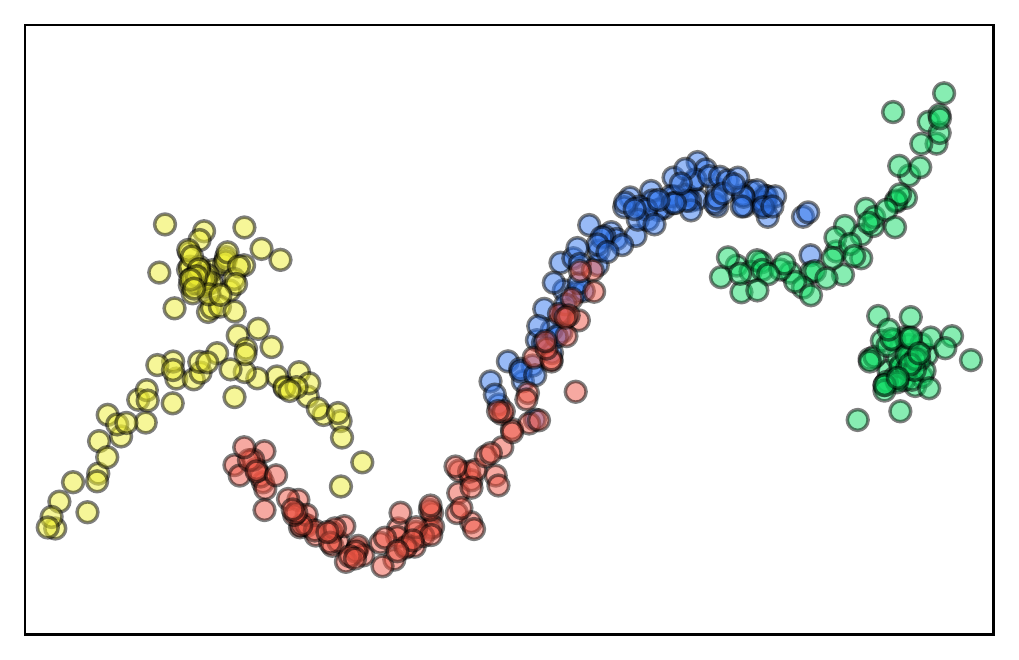}
	}%
	\caption{}
	\label{fig:synthetic_data}
	\end{subfigure}%
	\begin{subfigure}[t]{0.25\textwidth}
	\resizebox{\textwidth}{!}{%
		\includegraphics{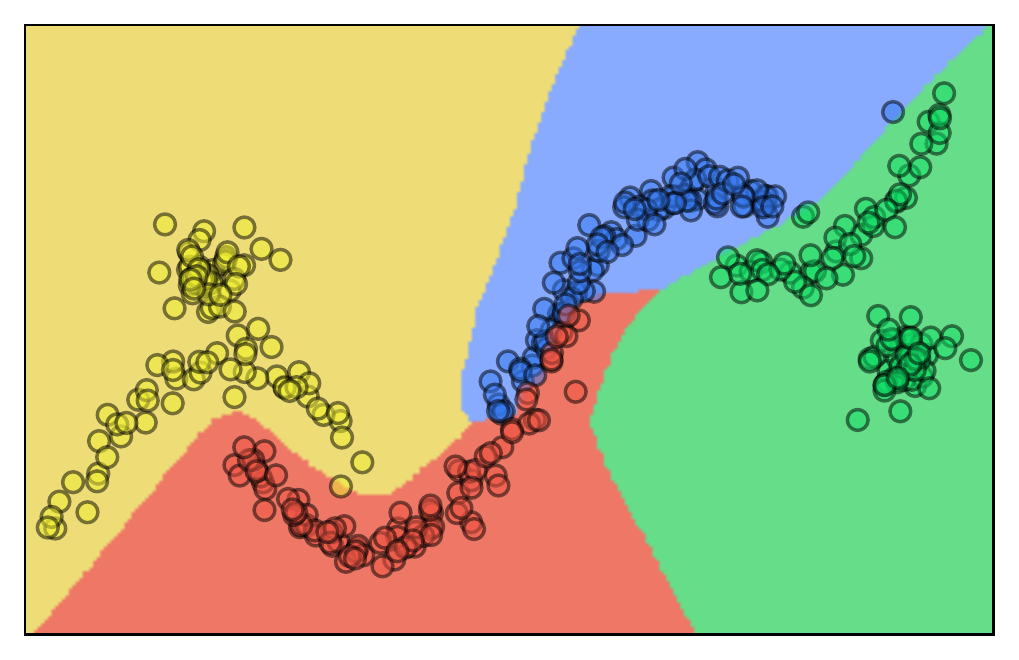}
	}%
	\caption{}
	\label{fig:std_boundary}
	\end{subfigure}%
	\begin{subfigure}[t]{0.25\textwidth}
	\resizebox{\textwidth}{!}{%
		\includegraphics{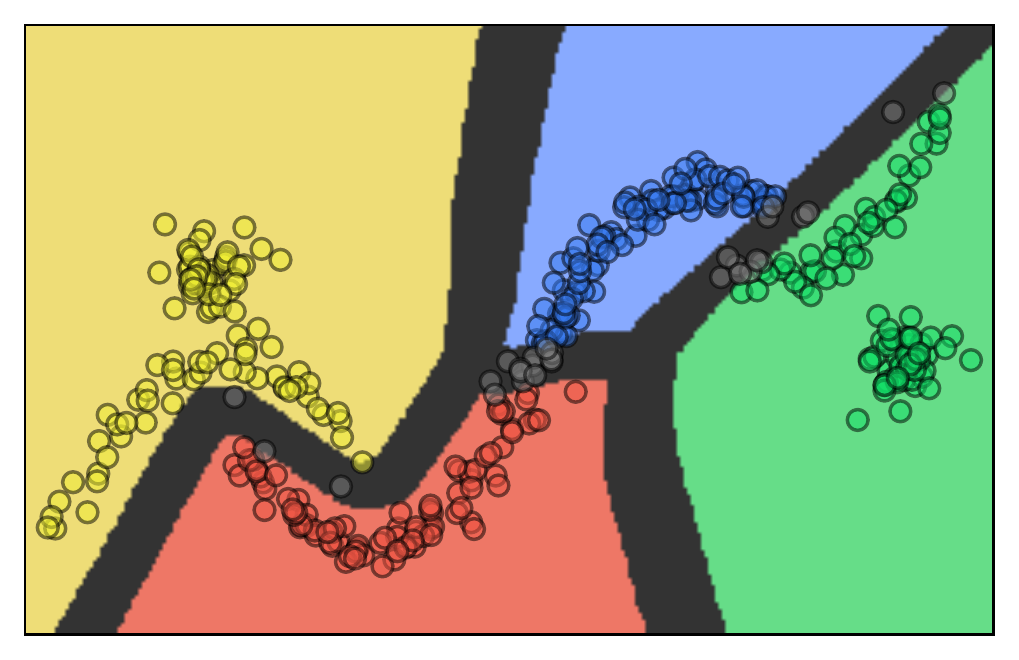}
	}%
	\caption{}
	\label{fig:gloro_boundary}
	\end{subfigure}%
	\begin{subfigure}[t]{0.25\textwidth}
	\resizebox{\textwidth}{!}{%
		\includegraphics{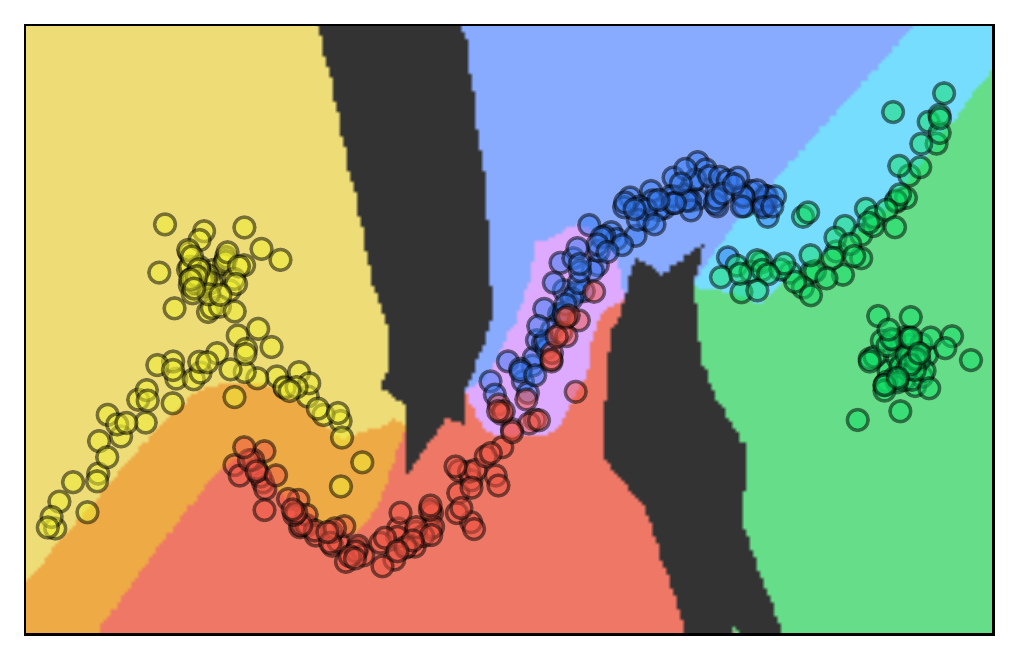}
	}%
	\caption{}
	\label{fig:rtk_boundary}
	\end{subfigure}%
	\caption{
		\textbf{(\subref{fig:synthetic_data})} An example 2D synthetic dataset containing four classes.
		\textbf{(\subref{fig:std_boundary})} Decision boundary of a standard model trained on the synthetic dataset.
		\textbf{(\subref{fig:gloro_boundary})} Decision boundary of a GloRo Net~\cite{leino21gloro} trained to be certifiably robust on the synthetic dataset.
		\textbf{(\subref{fig:rtk_boundary})} Decision boundary of an RT2 GloRo Net (see Section~\ref{sec:impl}) trained to be RT2 robust on the synthetic dataset.
		We observe that the RT2 GloRo Net can label the points as accurately as the standard model, while the GloRo Net must reject some point on the manifold.
		The RT2 GloRo Net reports a relaxed robustness guarantee (indicated by orange, purple, and cyan) in the regions where the classes overlap.
		E.g., in the orange region, the RT2 GloRo Net guarantees that no adversary can change the label to blue or green with a small-norm perturbation.
	}
	\label{fig:boundary_comparison}
\end{figure}

We can think of RTK robustness as allowing the model to output a set of labels (with size at most $K$) such that the output set remains invariant under bounded-norm perturbations.
In some cases there may be multiple such sets, e.g., if the model is both top-1 robust and top-2 robust on the point (see Figure~\ref{fig:top1_and_top2} for an example).
These can be thought of as the sets of classes that are ``safe'' to predict on a given point; if the only such set is empty then no classes are safe, and the model abstains from predicting.
This gives rise to a spatially-arranged hierarchy of classes that is conveyed through the robustness guarantee: each region on the decision surface corresponds to some set (or sets) of classes that can be robustly predicted.
Figure~\ref{fig:gloro_boundary} gives an example of how such a decision surface might look on a synthetic 2D dataset, via a coloring that indicates the \emph{smallest} safe set that can be predicted.

\section{Affinity Robustness}
\label{sec:affinity}


In Section~\ref{sec:rtk} we demonstrate how to relax local robustness in order to capture a robustness notion analogous to the concept of top-$k$ accuracy.
We observed that training for RTK robustness induces a decision surface with a spatially-arranged hierarchy of classes conveyed through the robustness guarantee.
While we observed that this hierarchy arises naturally, in some cases we may wish to guide the hierarchy that forms. 
E.g., we may have \textit{a priori} knowledge of the class hierarchy that we would like to impart to our model, or we may have a limited set of class groupings that are acceptable.

In this section, we demonstrate how the class hierarchy induced on the decision surface of a model can be guided using pre-specified \emph{affinity sets}, which define the sets of classes that can be grouped together without intervening rejected space on the decision surface.
More formally, let $\mathcal{S}$ be a collection of affinity sets, where each affinity set $S\in\mathcal{S}$ is simply a set of class labels.
We will typically assume that for each class, $j$, $\exists~S\in\mathcal S$ such that $j\in S$; that is, each class is included in at least one affinity set.

For a given collection of affinity sets, we can define \emph{affinity robustness} (Definition~\ref{def:affinity}), which stipulates that points, $x$, may be top-$k$ robust for any $k$, so long as $F^k(x)$ is contained in some affinity set.
That is, classes may be ``grouped'' together in a robust prediction set with other classes that share an affinity set.

\begin{definition}[Affinity Robustness]
\label{def:affinity}
A model, $F$, is affinity-$\epsilon$-locally-robust on a point, $x$, w.r.t. a collection of affinity sets, $\mathcal S$, and norm $||\cdot||$, if
$$
\forall x'~~.~~
||x - x'|| \leq \epsilon ~\Longrightarrow~ \exists~k\in \mathbb{N},~S \in \mathcal S ~:~ F^k(x) = F^k(x') ~~\wedge~~ F^k(x) \subseteq S
$$
\end{definition}

\paragraph{Examples of Affinity Sets.}
Several common datasets offer natural instantiations of affinity sets.
For example, CIFAR-100 contains 100 classes that are grouped into 20 super-classes containing 5 related classes each.
Additionally, Imagenet classes are derived from a tree structure from which many natural collections of affinity sets could be derived.
Finally, affinity sets can be used to capture mistakes that may arise even without adversarial manipulation, e.g., by grouping classes that are visually similar; or to group classes that may naturally co-occur in the same instance, e.g., highways and pastures in EuroSAT~\cite{helber18eurosat} satellite images (see Section~\ref{sec:eval:eurosat}).

\section{Efficiently-Certifiable Construction}
\label{sec:impl}


In this section we describe how to produce networks that incorporate certifiable RTK or affinity robustness into their training objectives.
In particular we follow a similar approach to \citet{leino21gloro}, instrumenting the output of a neural network to return an added class, $\bot$, in cases where the desired property cannot be certified.
Perhaps surprisingly, our construction demonstrates that our proposed robustness properties can be certified efficiently, \emph{with little overhead compared to a forward pass of the network}.

\subsection{Background: Globally Robust Neural Networks}
Recently, \citet{leino21gloro} proposed \emph{Globally Robust Neural Networks} (GloRo Nets) for training certifiably robust deep networks.
A GloRo Net encodes robustness certification into its architecture such that all points are either rejected (the network predicts an added class, $\bot$), or certifiably $\epsilon$-locally-robust.
GloRo Nets make use of the underlying network's global Lipschitz constant.
Specifically, let $f$ be a neural network, let $j$ be the class predicted by $f$ on point, $x$, and let $K_{ji}$ (for $i\neq j$) be the Lipschitz constant of $f_j - f_i$.
A GloRo Net adds an extra logit value, $f_\bot(x) = \max_{i\neq j}\{f_i(x) + \epsilon K_{ji}\}$.
If the margin between the highest logit output, $f_j(x)$, and the second-highest logit output, $f_i(x)$, is smaller than $\epsilon K_{ji}$, then $f_\bot(x)$ will be the maximal logit score in the GloRo Net, thus the point will be rejected; otherwise the GloRo Net will not predict $\bot$ and the point can be certified as $\epsilon$-locally-robust.

\subsection{RTK and Affinity GloRo Nets}
We propose two variations of GloRo Nets, \emph{RTK GloRo Nets} and \emph{Affinity GloRo Nets}, which naturally satisfy RTK and affinity robustness respectively on all non-rejected points.
We first demonstrate how to construct RTK GloRo Nets by instrumenting a model, $f$, such that the instrumented model returns $\bot$ unless $f$ can be certified as RTK $\epsilon$-locally-robust.
The construction for Affinity GloRo Nets is similar; it is omitted here, but the details are included in Appendix~\ref{appendix:affinity} in the supplementary material.

As defined in Section~\ref{sec:rtk}, let $F$ be the predictions made by $f$, i.e., $F(x) = \argmax_i\{f_i(x)\}$, and let $F^k(x)$ be the set of the top $k$ predictions made by $f$ on $x$; 
and as above, let $K_{ji}$ be the Lipschitz constant of $f_j - f_i$.

For $k \leq K$ and $j\in F^k(x)$, let $m^k_j(x) = f_j(x) - \max_{i \notin F^k(x)}\{f_i(x) + \epsilon K_{ji}\}$.
Intuitively, $m^k_j(x)$ is the margin by which class $j$, which is in the top $k$ classes, exceeds every class not in the top $k$ classes, after accounting for the maximum change in logit values within a radius of $\epsilon$ determined by the Lipschitz constant.
We observe that if $m^k_j(x) > 0$ then the logit for class $j$, $f_j(x)$, cannot be surpassed within the $\epsilon$-ball around $x$ by an output not in the top $k$ outputs, $f_i(x)$ for $i \notin F^k(x)$.

Next, let $m^k(x) = \min_{j\in F^k(x)}\{m^k_j(x)\}$.
This represents the minimum margin by which \emph{any} class in the top $k$ classes exceeds every class not in the top $k$ classes; thus if $m^k(x) > 0$, then $F$ is top-$k$ robust at $x$.

Finally, let $m(x) = \max_{k \leq K}\{m^k(x)\}$.
We observe that if $m(x) > 0$, then the model is RTK robust at $x$.
We would thus like to predict $\bot$ only when $m(x) < 0$.
To accomplish this we create an instrumented model, $g$, as given by Equation~\ref{eq:rtk_gloro}.
This instrumented model naturally satisfies RTK robustness, as stated by Theorem~\ref{thm:rtk_gloro}.
\begin{equation}
\label{eq:rtk_gloro}
g_i(x) = f_i(x);~~ g_\bot(x) = \max_i\{f_i(x) - m(x)\}
\end{equation}

\begin{theorem}
\label{thm:rtk_gloro}
Let $g$ be an RTK GloRo Net as defined by Equation~\ref{eq:rtk_gloro}. 
Then, if the maximal output of $g(x)$ is not $\bot$, then $F$ is RTK $\epsilon$-locally-robust at $x$.
\end{theorem}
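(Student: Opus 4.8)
The plan is to peel the definition of $g_\bot$ apart from the outside in, reduce the hypothesis to a family of strict logit-margin inequalities at $x$, and then push those inequalities out to every point of the $\epsilon$-ball via the Lipschitz bounds $K_{ji}$. The intermediate target is the slightly stronger statement that \emph{some fixed} $k \le K$ makes $F$ top-$k$ robust at $x$, which plainly implies RTK $\epsilon$-local-robustness (for that $x'$, take this $k$).

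First I would note that $m(x)$ does not depend on a class index, so $g_\bot(x) = \max_i\{f_i(x) - m(x)\} = \big(\max_i f_i(x)\big) - m(x)$, whereas every ordinary output satisfies $g_i(x) = f_i(x) \le \max_i f_i(x)$; hence ``the maximal output of $g(x)$ is not $\bot$'' forces $\big(\max_i f_i(x)\big) - m(x) < \max_i f_i(x)$, i.e. $m(x) > 0$ (treating a tie for the maximum with $\bot$ as a $\bot$-prediction). Next, unfolding $m(x) = \max_{k \le K} m^k(x)$ picks out some $k \le K$ with $m^k(x) > 0$, and unfolding $m^k(x) = \min_{j \in F^k(x)} m^k_j(x)$ gives $m^k_j(x) > 0$ for every $j \in F^k(x)$, which by definition of $m^k_j$ is exactly
\begin{equation*}
f_j(x) - f_i(x) > \epsilon\,K_{ji} \qquad \text{for every } j \in F^k(x) \text{ and } i \notin F^k(x).
\end{equation*}
Then, fixing this $k$ and any $x'$ with $||x - x'|| \le \epsilon$, the defining property of the Lipschitz constant of $f_j - f_i$ yields $(f_j - f_i)(x') \ge (f_j - f_i)(x) - \epsilon\,K_{ji}$, so combining with the displayed inequality gives $f_j(x') > f_i(x')$ for every $j \in F^k(x)$, $i \notin F^k(x)$. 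Thus at $x'$ every class in $F^k(x)$ strictly outscores every class outside it; finally, since $|F^k(x)| \ge k$, the $k$-th largest logit at $x'$ is attained inside $F^k(x)$ and strictly exceeds every logit outside it, so $F^k(x') = F^k(x)$. As $x'$ was arbitrary, $F$ is top-$k$ robust at $x$, hence RTK $\epsilon$-locally-robust at $x$.

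I expect essentially no difficulty in the unfoldings, the algebra collapsing $g_\bot$, or the one-line Lipschitz estimate; the only step meriting care is the last implication ``$F^k(x') = F^k(x)$''. When $f^k(x) > f^{k+1}(x)$, so $|F^k(x)| = k$, it is immediate from the outscoring property (the $k$ classes of $F^k(x)$ occupy the top $k$ logit positions at $x'$, and the rest lie strictly below). When several logits tie at the $k$-th value at $x$, however, $F^k(x)$ has more than $k$ elements and that tie may resolve differently at $x'$, potentially shrinking $F^k(x')$; this is the genuinely delicate case, and I would address it by adding the (measure-one) assumption that the logits of $f$ at $x$ are distinct, under which the argument above goes through unchanged.
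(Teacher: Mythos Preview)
Your proposal is correct and follows essentially the same route as the paper's proof: reduce ``$\bot$ is not maximal'' to $m(x)>0$, pick $k^*\le K$ with $m^{k^*}(x)>0$, unfold to the margin inequalities $f_j(x)-f_i(x)>\epsilon K_{ji}$, push these to $x'$ via the Lipschitz bound, and conclude $F^{k^*}(x)=F^{k^*}(x')$. Your treatment of the tie case is in fact slightly more explicit than the paper's, which simply inserts a ``WLOG each logit value is unique'' assumption at the final step.
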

The proof of Theorem~\ref{thm:rtk_gloro}, as well as an analogous theorem stating correctness of our Affinity GloRo Net implementation, is given in Appendix~\ref{appendix:proofs} in the supplementary material.

\paragraph{Implementation.}
In order to implement RTK or Affinity GloRo Nets, we must compute the Lipschitz constant of the instrumented network, $f$.
Furthermore, this computation must be differentiable in order to incorporate certification into the learning routine.
We approximate an upper bound of the global Lipschitz constant of $f$ by taking a layer-wise product of the spectral norms of each layer's kernel matrix, using the power method.
While this is efficient, it may also provide a loose bound leading the GloRo Net to reject more points than necessary.
Nonetheless, we find this to be an effective method of certification.

\section{Evaluation}
\label{sec:eval}


In this section, we motivate our proposed robustness relaxations via an empirical demonstration and argue that our proposed relaxations are likely to be relevant for extending certified defenses to complex prediction tasks with many classes, where standard robustness may be difficult or unrealistic to achieve.
To this end, we first find that applying these relaxed robustness variants to suitable domains leads to certifiable models with lower rejection rates and higher certified accuracy (Section~\ref{sec:eval:performance}).
We then explore the intriguing properties of a model's prediction space that arise when the model is trained with the objective of RTK or affinity robustness on appropriate domains.
In particular, we examine the predictions of RTK and Affinity GloRo Nets trained on EuroSAT~\cite{helber18eurosat} (Section~\ref{sec:eval:eurosat}) and CIFAR-100 (Section~\ref{sec:eval:cifar}), and find that (1) the classes that are ``grouped'' by the model typically follow a logical structure, even without supervision, and (2), affinity robustness can be used to capture a small set of specific, challenging class distinctions that account for a relatively large fraction of the points satisfying RTK robustness but not standard robustness.

In addition, we provide further motivation for Affinity GloRo Nets by showing how they can be leveraged to efficiently certify a previously-studied safety property for ACAS Xu~\cite{julian16acas}, a collision avoidance system for unmanned aircraft that has been a primary motivation in many prior works that study certification of neural network safety properties~\cite{katz17reluplex,katz19acas_cert,manzana21acas_cert,gopinath18targeted}.
Specifically, we show that Affinity GloRo Nets can certify \emph{targeted safe regions}~\cite{gopinath18targeted} in a single forward pass of the network, while previous techniques based on formal methods require hours to certify this property, even on smaller networks~\cite{gopinath18targeted}.
These experiments are presented in Appendix~\ref{appendix:acas} in the supplementary material.

\subsection{Improving Certified Performance through Relaxation}
\label{sec:eval:performance}

We begin our evaluation by measuring the extent to which certification performance can be improved when the objective is relaxed.
We focus particularly on \emph{deterministic} certification and guarantees, as opposed to the types of guarantees obtained via Randomized Smoothing~\cite{cohen19smoothing}.
To this end, we compare against GloRo Nets~\cite{leino21gloro}, which have achieved state-of-the art performance for deterministic certification on several common benchmark datasets.

\paragraph{Datasets.}
Our evaluation focuses on datasets for which our relaxed robustness variants are appropriate.
Namely, we select datasets with large numbers of fine-grain classes, or classes with a large degree of feature-overlap: EuroSAT~\cite{helber18eurosat}, CIFAR-100~\cite{Krizhevsky09learningmultiple}, and Tiny-Imagenet~\cite{Le2015TinyIV}.
The most widely-studied datasets for benchmarking deterministic robustness certification are CIFAR-10 and MNIST, which do not fit these desiderata; however, Tiny-Imagenet has also been used for evaluating deterministic certification~\cite{lee20local_margin,leino21gloro}, making our results directly comparable to the previously-published state-of-the-art. 

\paragraph{Models.} 
We trained three types of models in our evaluation: GloRo Nets (as a point of comparison to standard robustness certification), RTK GloRo Nets, and Affinity GloRo Nets.
More details on the affinity sets chosen for the Affinity GloRo Nets are given in Section~\ref{sec:eval:eurosat} (for EuroSAT) and Section~\ref{sec:eval:cifar} (for CIFAR-100).
The details of the architecture, training procedure, and hyperparameters are provided in Appendix~\ref{appendix:hyperparams} in the supplementary material.

\paragraph{Metrics.}
We measure the performance of RTK models using a metric we call \emph{RTK VRA}, which is the natural analogous metric to top-$k$ accuracy.
For a given point, $x$, that is certifiably RTK robust, let $k^*$ be the the maximum $k \leq K$ such that the model is top-$k$ robust at $x$ (recall that an RTK robust point can be top-$k$ robust for more than one $k$---$k^*$ corresponds to the \emph{loosest} such guarantee).
We define the RTK VRA of model, $F$, as the fraction of labeled points, $(x, y)$, such that (1) the model is RTK robust at $x$, and (2) $y\in F^{k^*}$.
In other words, the correct label must be in a \emph{certifiably-robust set of top-$k$ predictions} for some $k \leq K$.
Similarly, we define \emph{affinity VRA}, with respect to a collection of affinity sets, $\mathcal S$, as the fraction of points for which the correct label is in a certifiably-robust set of top-$k$ predictions that is contained in some affinity set in $\mathcal S$.

We also provide clean accuracy metrics for each model, corresponding to the guarantee the model is trained for; e.g., top-$k$ accuracy for RTK GloRo Nets.
For Affinity GloRo Nets, we use what we call \emph{affinity accuracy}, which counts a prediction as correct if all labels scored above the ground truth share a single affinity set with the ground truth.

\begin{table}[t]
\centering
\resizebox{\textwidth}{!}{%
\begin{tabular}{llc|ccc}
\toprule
\textit{dataset} & \textit{guarantee} & $\epsilon$ &
	\textit{VRA / RTK VRA / affinity VRA} & \textit{rejection rate} & \textit{clean accuracy} \\
\midrule 
EuroSAT & standard & 0.141 & 
	0.749~~$\pm$~~0.003 & 0.204~~$\pm$~~0.002 & 0.862~~$\pm$~~0.003 \\
EuroSAT & RT3 & 0.141 & 
	0.908~~$\pm$~~0.002 & 0.073~~$\pm$~~0.002 & 0.987~~$\pm$~~0.002  \\
EuroSAT & highway+river affinity & 0.141 & 
	0.798~~$\pm$~~0.002 & 0.170~~$\pm$~~0.002 & 0.917~~$\pm$~~0.003 \\
EuroSAT & highway+river+agriculture affty. & 0.141 & 
	0.819~~$\pm$~~0.003 & 0.151~~$\pm$~~0.003 & 0.930~~$\pm$~~0.003 \\
\midrule
CIFAR-100 & standard & 0.141 & 
	0.281~~$\pm$~~0.002 & 0.640~~$\pm$~~0.003 & 0.473~~$\pm$~~0.002 \\
CIFAR-100 & RT5 & 0.141 & 
	0.360~~$\pm$~~0.002 & 0.562~~$\pm$~~0.002 & 0.706~~$\pm$~~0.003 \\
CIFAR-100 & superclass affinity & 0.141 &
	0.323~~$\pm$~~0.002 & 0.599~~$\pm$~~0.002 & 0.520~~$\pm$~~0.002 \\
\midrule
Tiny-Imagenet & standard & 0.141 & 
	0.224$^\text{\cite{leino21gloro}}$\hspace{2.59em} &
	0.639$^\text{\cite{leino21gloro}}$\hspace{2.59em} &
	0.346$^\text{\cite{leino21gloro}}$\hspace{2.59em} \\
Tiny-Imagenet & RT5 & 0.141 & 
	0.277~~$\pm$~~0.002 & 0.447~~$\pm$~~0.006 & 0.537~~$\pm$~~0.002 \\
\midrule
ACAS Xu (App.~\ref{appendix:acas}) & targeted affinity & 0.010 &
	0.749~~$\pm$~~0.001 & 0.195~~$\pm$~~0.001 & 0.858~~$\pm$~~0.002 \\
\bottomrule
\end{tabular}}
\vspace{1em}
\caption{
	Certification results under various notions of robustness. VRA and clean accuracy numbers are given for the VRA/accuracy metric corresponding to the robustness guarantee the respective model was trained for.
	Results are taken as the average over 10 runs; standard deviations are denoted by $\pm$.
}
\label{tab:vra_and_rejection}
\end{table}

\paragraph{Performance.} 
Table~\ref{tab:vra_and_rejection} shows the performance of GloRo Nets compared to RTK GloRo Nets and affinity GloRo Nets with respect to the appropriate VRA metric, as well as the rejection rate of each model, i.e., the fraction of points that cannot be certified.
RTK GloRo Nets use the most relaxed objective, and accordingly, we see that they consistently outperform the standard GloRo Net, improving VRA performance by 6-16 percentage points.
Additionally, RTK GloRo Nets reduce the rejection rate significantly, rejecting as few as half the number of points rejected by the standard GloRo Nets.
We also observe that affinity GloRo Nets consistently improve performance compared to standard GloRo Nets.
In particular, highway+river+agriculture affinity (see Section~\ref{sec:eval:eurosat}) and superclass affinity (see Section~\ref{sec:eval:cifar}) increase VRA performance by 8 points and 4 points respectively, despite the fact that these affinity guarantees are significantly more restrictive than the RTK guarantees.

\paragraph{Relaxed Guarantees \& Learning Objectives.}
As demonstrated by the results in Table~\ref{tab:vra_and_rejection}, RTK and affinity robustness significantly improve certifiability and VRA performance.
Clearly, this improvement is in part due to the certification and evaluation criteria entailed by RTK and affinity robustness.
However, there is also evidence that the relaxed learning objective itself may also better aid in learning robust boundaries in general.
For example, on CIFAR-100, $5\%$ of points rejected by the GloRo Net are certified by the RT5 GloRo Net with a \emph{top-1 guarantee}, suggesting that the RT5 objective better facilitated obtaining a strong robustness guarantee on these points.
Similarly, we find that the objective of affinity robustness, while technically stronger than RTK robustness, guides the model towards greater certifiability than would be explained by the fraction of instances for which the RTK GloRo Net naturally satisfies the affinity set groupings, highlighting the significance of incorporating affinity robustness into the training objective.
For example, on CIFAR-100, $22\%$ of points certified by the RT5 GloRo Net receive a top-$k$ guarantee that does not correspond to a superclass.
This would correspond to a $16\%$ higher rejection rate under superclass affinity robustness; meanwhile the rejection rate of the superclass Affinity GloRo Net is in fact only $6\%$ higher than that of the RT5 GloRo Net.

\subsection{Relaxed Robustness Guarantees on EuroSAT}
\label{sec:eval:eurosat}

EuroSAT is a relatively recent dataset based on land-use classification of Sentinel-2 satellite images~\cite{helber18eurosat}.
Although EuroSAT contains only ten classes, we argue that it is nonetheless a suitable application for the types of relaxed robustness presented in this paper, primarily because of the lack of mutual exclusivity among its classes.
Specifically, the classification task proposed for EuroSAT contains the following classes to describe a $64\times64$ image patch: (1) annual crop, (2) forest, (3) herbaceous vegetation, (4) highway, (5) industrial buildings, (6) pasture, (7) permanent crop, (8) residential buildings (9) river, and (10) sea/lake.
Each instance has exactly one label, however, in practice the labels are not necessarily non-overlapping.
For example, highway images may depict a road going through agricultural land, crossing a river, or near buildings.
It may be reasonable, then, for a classifier to produce high logit values for two classes simultaneously, making local robustness potentially difficult to achieve.

\begin{figure}[t]
\centering
\begin{subfigure}[t]{\textwidth}
\resizebox{\textwidth}{!}{%
\includegraphics{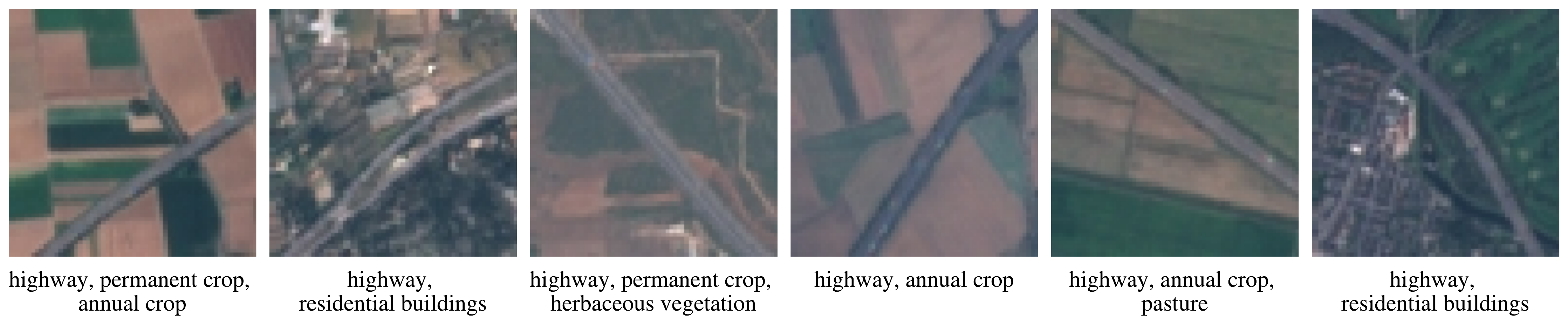}}
\vspace{-1.75em}
\caption{}
\label{fig:eurosat_highways}
\end{subfigure} \\
\begin{subfigure}[t]{0.666\textwidth}
\resizebox{\textwidth}{!}{%
\includegraphics{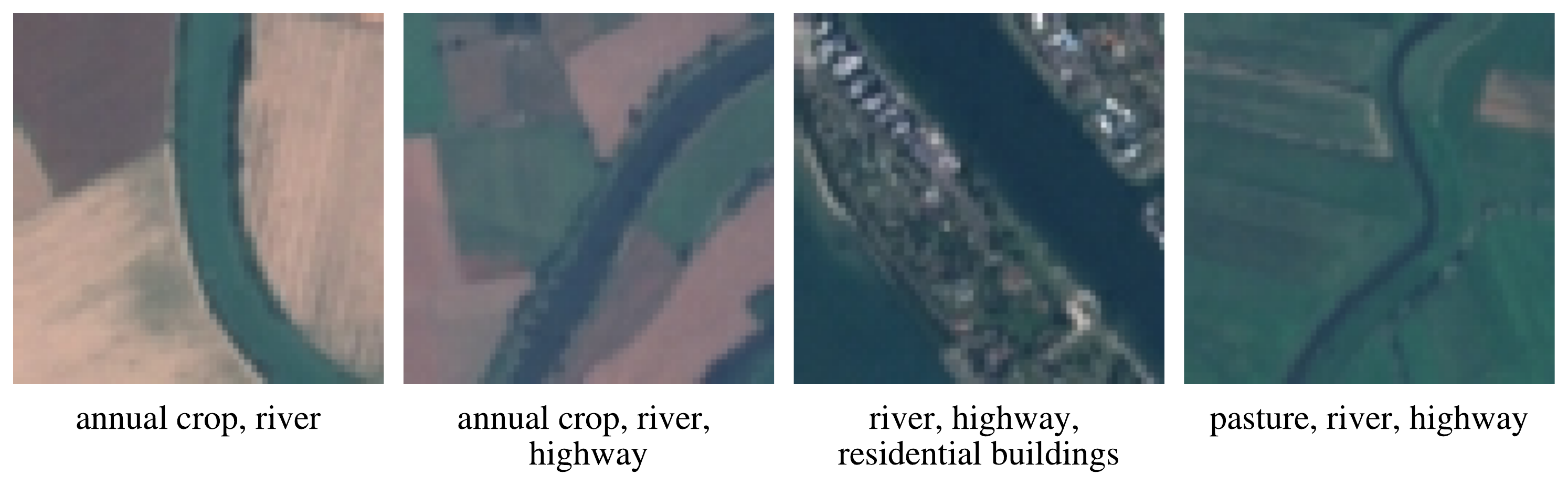}}
\vspace{-1.5em}
\caption{}
\label{fig:eurosat_rivers}
\end{subfigure}~%
\begin{subfigure}[t]{0.333\textwidth}
\resizebox{\textwidth}{!}{%
\includegraphics{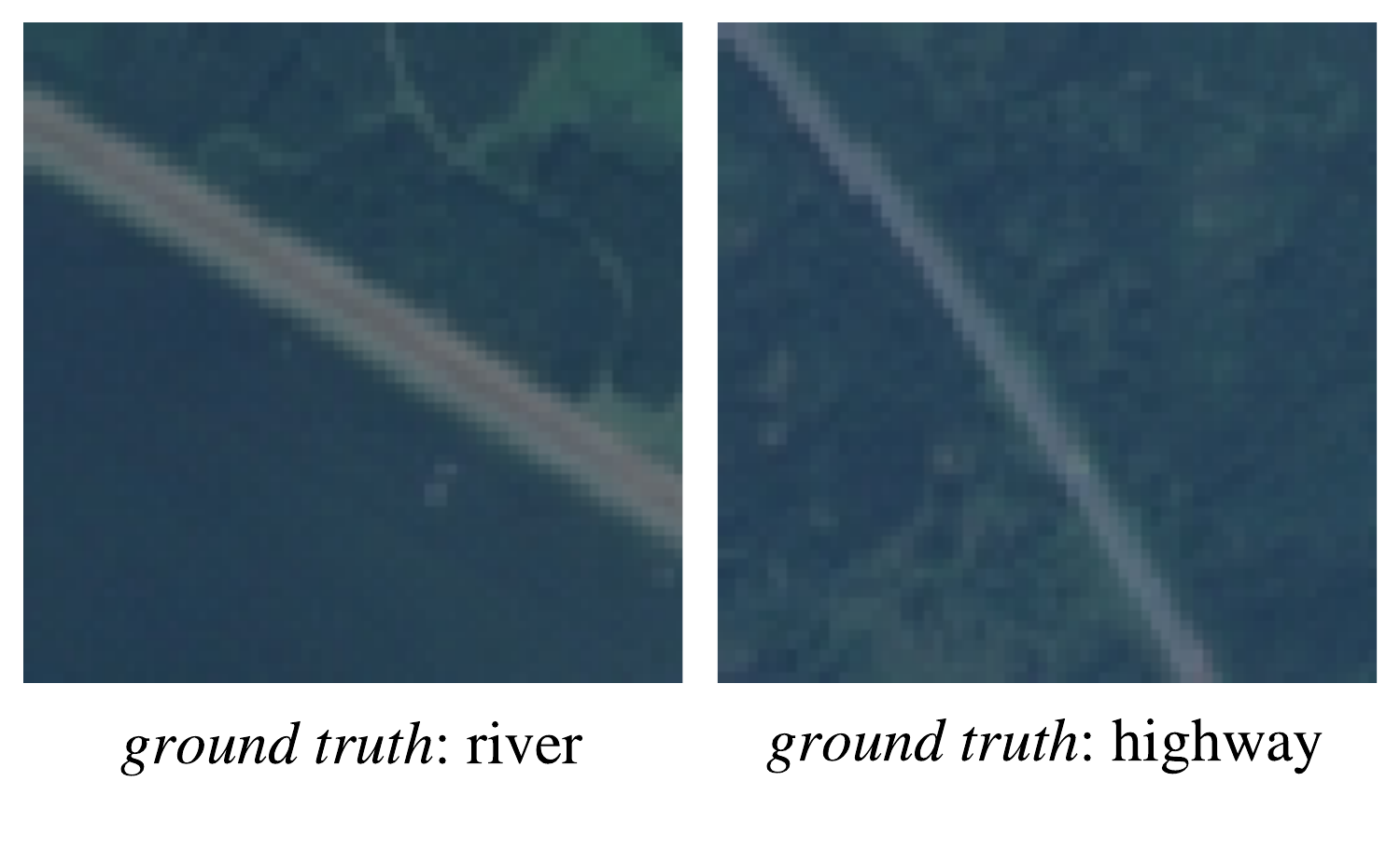}}
\vspace{-1.5em}
\caption{}
\label{fig:eurosat_ambiguous}
\end{subfigure}%
\caption{
	Samples of EuroSAT instances labeled ``highway'' \textbf{(\subref{fig:eurosat_highways})} or ``river'' \textbf{(\subref{fig:eurosat_rivers})} that are rejected (i.e., cannot be certified) by a standard GloRo Net, but certified by an RT3 GloRo Net.
	The classes included in the RT3 robustness guarantee are given beneath each image.
	\textbf{(\subref{fig:eurosat_ambiguous})} Two visually similar instances with ground truth label ``river'' (left) and ``highway'' (right).
}
\label{fig:eurosat_example}
\end{figure}

Indeed, we find this to be the case; namely, many EuroSAT instances labeled ``highway'' cannot be certified for standard robustness, while a large fraction of these rejected inputs \emph{can} be certified for RT3 robustness.
Specifically, we found that a standard GloRo Net trained on EuroSAT rejected $39\%$ of highway instances from the test set (above average for instances overall).
Meanwhile, an RT3 GloRo Net was able to certify $72\%$ of these same instances.
Upon examination, we found that many of the guarantees associated with the points rejected by the GloRo Net but certified by the RT3 GloRo Net appeal nicely to intuition.
Figure~\ref{fig:eurosat_highways} depicts a few of these intuitive examples; we see that on image patches with a highway in a field, the RT3 GloRo Net gives top-2 or top-3 guarantees with highway alongside classes such as ``annual crop,'' and on image patches with a highway near a neighborhood, it gives top-2 guarantees with highway grouped with ``residential buildings.''

We see a similar trend for the ``river'' class, shown in Figure~\ref{fig:eurosat_rivers}.
Additionally, we find that many instances of rivers receive an RTK guarantee including ``highway'' as one of the classes belonging to the corresponding robust prediction set.
As illustrated by Figure~\ref{fig:eurosat_ambiguous}, this may not be unreasonable---Figure~\ref{fig:eurosat_ambiguous} shows two visually similar EuroSAT instances with ground truth label ``river'' and ``highway'' respectively, demonstrating that the $64\times64$ patches may not always provide enough detail and context to easily distinguish these two classes.

The above intuition suggests that the difficulty in learning a certifiably-robust model on EuroSAT may be largely due to frequent cases where specific sets of classes may not be sufficiently separable.
That an adversary might have the ability to control which of a set of plausible labels is chosen may be considered inconsequential, provided the adversary cannot cause \emph{arbitrary} mistakes.
This observation motivates the use of affinity robustness on EuroSAT.
That is, we may wish to further restrict the sets of classes that may forgo $\epsilon$-separation (those that correspond to ``arbitrary'' mistakes), while at the same time admitting the model to group a specified set of classes, between which adversarial examples could be considered benign.

To this end, we suggest two plausible affinity sets for EuroSAT.
The first, which we refer to as \emph{highway+river affinity}, captures the challenges faced by standard GloRo Nets that are illustrated by Figure~\ref{fig:eurosat_example}; namely, $\mathcal S$ consists of one affinity set, $S_c$ per class, $c$, consisting of $c$, the class ``highway,'' and the class 	``river.''
The second, which we refer to as \emph{highway+river+agriculture affinity} additionally allows the classes ``permanent crop'' and ``annual crop'' to be grouped, as these classes are often visually similar.
We find that these two affinity sets allow us to improve the VRA on EuroSAT compared to a standard GloRo Net by 5 and 7 percentage points, respectively (see Table~\ref{tab:vra_and_rejection}).
Moreover, the performance of the highway+river+agriculture Affinity GloRo Net closes half of the VRA gap between the standard GloRo Net and the more-relaxed RTK GloRo Net, suggesting that roughly half of the performance benefits obtained under RTK robustness can be recovered by accounting for a few simple types of reasonable mistakes.

\subsection{Relaxed Robustness Guarantees on CIFAR-100}
\label{sec:eval:cifar}

CIFAR-100 is another natural application for which our relaxed robustness variants are suitable for several reasons, including that (1) it contains a large number of classes, (2) many of the classes are fine-grain, especially considering the small size of the input images, and (3) its 100 classes are further organized into 20 \emph{superclasses} that each encompass 5 classes.

\begin{figure}[t]
\centering
\resizebox{\textwidth}{!}{%
\includegraphics{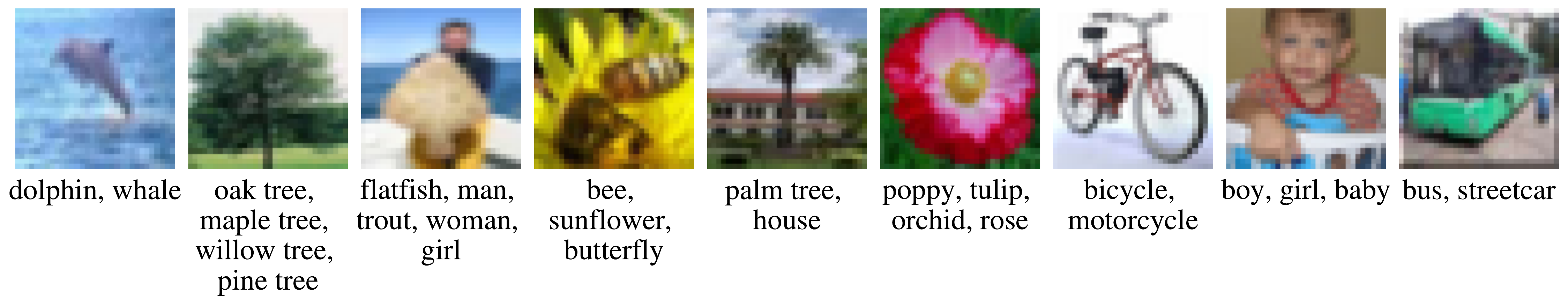}}
\vspace{-1.5em}
\caption{
	Samples of CIFAR-100 instances that are both correctly classified and certified as top-$k$ robust for $k > 1$.
	The classes included in the RT5 robustness guarantee are given beneath each image.
}
\vspace{-0.5em}
\label{fig:cifar_examples}
\end{figure}

\begin{figure}[t]
\centering
\resizebox{\textwidth}{!}{%
\includegraphics{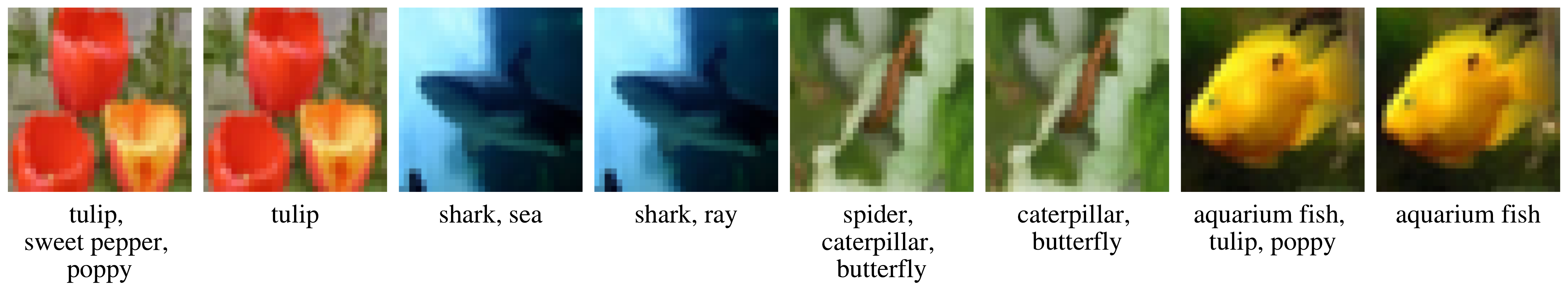}}
\vspace{-1.5em}
\caption{
	Comparison of robust prediction sets produced by an RT5 GloRo Net (left) and a superclass Affinity GloRo Net (right).
	Samples are taken from points on which the RT5 GloRo Net did not match a single superclass, while the Affinity GloRo Net was able to successfully certify the point.
}
\label{fig:cifar_affinity_comp}
\end{figure}

We find that when training for RT5 robustness using an RT5 GloRo Net, $78\%$ of certifiable points on the RT5 GloRo Net have corresponding robust prediction sets that are contained by some superclass set.
That is, even without supervision, the robust prediction sets of the RTK GloRo net typically respect the superclass hierarchy.
Moreover, even on instances for which the robust prediction set does not match a superclass set, the robust prediction set is often nonetheless ``reasonable,'' in that it is often clear upon inspection why the model may have chosen the particular set of predictions.
Figure~\ref{fig:cifar_examples} provides samples of correctly-classified, RT5-certifiable points with their corresponding robust prediction sets, illustrating this point.
More such examples can be found in Appendix~\ref{appendix:more_results} in the supplementary material.

However, supposing we would want to strictly enforce the model's robust prediction sets to be contained entirely in one superclass, Affinity GloRo Nets provide a means of doing this.
We find that $34\%$ of instances on which the RT5 GloRo Net fails to match a superclass can be certified by the superclass Affinity GloRo Net.
Figure~\ref{fig:cifar_affinity_comp} provides examples of such instances, showing that the additional supervision of Affinity GloRo Nets helps better ensure the robust prediction sets respect superclasses.

\section{Conclusion}
\label{sec:conclusion}


In this work, we introduce two novel safety properties for classifiers that relax local robustness in order to provide a more practical objective for certifiable defenses in complex prediction tasks where standard robustness may be difficult or unrealistic to achieve.
The first property, RTK robustness, constitutes the first robustness notion extending to top-$k$ prediction tasks that can be certified without knowledge of the ground-truth label.
The second, affinity robustness, is a novel robustness notion tailored to certifiable defenses against targeted adversarial examples.
We show how to construct models that can be efficiently certified against each relaxed robustness property, and demonstrate that these properties are well-suited to several significant classification problems, leading to lower rejection rates and higher certified accuracies than can be obtained when certifying ``standard'' local robustness.
We suggest that this work will be useful in striving towards performance parity between certifiable and non-certifiable classification; and more generally, that broader, domain-appropriate safety guarantees should be considered for certifying model safety.
Finally, we note that although robustness certification is typically helpful in making machine learning safe for high-stakes contexts, techniques drawn from evasion attacks may be used to protect privacy~\cite{shan20evasion_privacy}, meaning robust models may thwart such avenues for anonymity.

\begin{ack}
The work described in this paper has been supported by the Software Engineering Institute under its FFRDC Contract No. FA8702-15-D-0002 with the U.S. Department of Defense, and by the National Science Foundation under Grant No. CNS-1943016.
\end{ack}

\bibliographystyle{plainnat}
\bibliography{grib}

\begin{thebibliography}{34}
\providecommand{\natexlab}[1]{#1}
\providecommand{\url}[1]{\texttt{#1}}
\expandafter\ifx\csname urlstyle\endcsname\relax
  \providecommand{\doi}[1]{doi: #1}\else
  \providecommand{\doi}{doi: \begingroup \urlstyle{rm}\Url}\fi

\bibitem[Anil et~al.(2019)Anil, Lucas, and Grosse]{anil19minmax}
Cem Anil, James Lucas, and Roger Grosse.
\newblock Sorting out {L}ipschitz function approximation.
\newblock In \emph{ICML}, 2019.

\bibitem[Berrada et~al.(2018)Berrada, Zisserman, and
  Kumar]{berrada18top_k_loss}
Leonard Berrada, Andrew Zisserman, and M.~Pawan Kumar.
\newblock Smooth loss functions for deep top-k classification.
\newblock In \emph{International Conference on Learning Representations}, 2018.

\bibitem[Brown et~al.(2017)Brown, Man{\'{e}}, Roy, Abadi, and
  Gilmer]{brown17patch}
Tom~B. Brown, Dandelion Man{\'{e}}, Aurko Roy, Martín Abadi, and Justin
  Gilmer.
\newblock Adversarial patch.
\newblock In \emph{NIPS Workshop on Machine Learning and Computer Security},
  2017.

\bibitem[Cohen et~al.(2019)Cohen, Rosenfeld, and Kolter]{cohen19smoothing}
Jeremy Cohen, Elan Rosenfeld, and Zico Kolter.
\newblock Certified adversarial robustness via randomized smoothing.
\newblock In \emph{International Conference on Machine Learning (ICML)}, 2019.

\bibitem[Croce and Hein(2020)]{croce19multiple_norms}
Francesco Croce and Matthias Hein.
\newblock Provable robustness against all adversarial
  l\({}_{\mbox{p}}\)-perturbations for p{\(\geq\)}1.
\newblock In \emph{ICLR}, 2020.

\bibitem[Croce et~al.(2019)Croce, Andriushchenko, and Hein]{croce19mmr}
Francesco Croce, Maksym Andriushchenko, and Matthias Hein.
\newblock Provable robustness of {ReLU} networks via maximization of linear
  regions.
\newblock In \emph{International Conference on Artificial Intelligence and
  Statistics (AISTATS)}, 2019.

\bibitem[Engstrom et~al.(2017)Engstrom, Tsipras, Schmidt, and
  Madry]{engstrom17rotation}
Logan Engstrom, Dimitris Tsipras, Ludwig Schmidt, and Aleksander Madry.
\newblock A rotation and a translation suffice: Fooling cnns with simple
  transformations.
\newblock In \emph{NIPS Workshop on Machine Learning and Computer Security},
  2017.

\bibitem[Evtimov et~al.(2017)Evtimov, Eykholt, Fernandes, Kohno, Li, Prakash,
  Rahmati, and Song]{evtimov17physical}
Ivan Evtimov, Kevin Eykholt, Earlence Fernandes, Tadayoshi Kohno, Bo~Li, Atul
  Prakash, Amir Rahmati, and Dawn Song.
\newblock Robust physical-world attacks on machine learning models.
\newblock In \emph{Conference on Computer Vision and Pattern Recognition
  (CVPR)}, 2017.

\bibitem[Fromherz et~al.(2021)Fromherz, Leino, Fredrikson, Parno, and
  Păsăreanu]{fromherz20fgp}
Aymeric Fromherz, Klas Leino, Matt Fredrikson, Bryan Parno, and Corina
  Păsăreanu.
\newblock Fast geometric projections for local robustness certification.
\newblock In \emph{ICLR}, 2021.

\bibitem[Goodfellow et~al.(2015)Goodfellow, Shlens, and
  Szegedy]{goodfellow15adv_example}
Ian~J. Goodfellow, Jonathon Shlens, and Christian Szegedy.
\newblock Explaining and harnessing adversarial examples.
\newblock In \emph{ICLR}, 2015.

\bibitem[Gopinath et~al.(2018)Gopinath, Katz, P{\u{a}}s{\u{a}}reanu, and
  Barrett]{gopinath18targeted}
Divya Gopinath, Guy Katz, Corina~S. P{\u{a}}s{\u{a}}reanu, and Clark Barrett.
\newblock Deepsafe: A data-driven approach for assessing robustness of neural
  networks.
\newblock In \emph{Automated Technology for Verification and Analysis}, 2018.

\bibitem[Helber et~al.(2018)Helber, Bischke, Dengel, and
  Borth]{helber18eurosat}
Patrick Helber, Benjamin Bischke, Andreas Dengel, and Damian Borth.
\newblock Introducing eurosat: A novel dataset and deep learning benchmark for
  land use and land cover classification.
\newblock In \emph{IEEE International Geoscience and Remote Sensing Symposium},
  2018.

\bibitem[Jia et~al.(2020)Jia, Cao, Wang, and Gong]{jia20flop_k}
Jinyuan Jia, Xiaoyu Cao, Binghui Wang, and Neil~Zhenqiang Gong.
\newblock Certified robustness for top-k predictions against adversarial
  perturbations via randomized smoothing.
\newblock In \emph{ICLR}, 2020.

\bibitem[Jordan et~al.(2019)Jordan, Lewis, and Dimakis]{jordan19geocert}
Matt Jordan, Justin Lewis, and Alexandros~G. Dimakis.
\newblock Provable certificates for adversarial examples: Fitting a ball in the
  union of polytopes.
\newblock In \emph{NIPS}, 2019.

\bibitem[Julian et~al.(2016)Julian, Lopez, Brush, Owen, and
  Kochenderfer]{julian16acas}
Kyle~D. Julian, Jessica Lopez, Jeffrey~S. Brush, Michael~P. Owen, and Mykel~J.
  Kochenderfer.
\newblock Policy compression for aircraft collision avoidance systems.
\newblock In \emph{IEEE/AIAA Digital Avionics Systems Conference (DASC)}, 2016.

\bibitem[Katz et~al.(2017)Katz, Barrett, Dill, Julian, and
  Kochenderfer]{katz17reluplex}
Guy Katz, Clark~W. Barrett, David~L. Dill, Kyle Julian, and Mykel~J.
  Kochenderfer.
\newblock Reluplex: An efficient {SMT} solver for verifying deep neural
  networks.
\newblock In \emph{International Conference on Computer-Aided Verification
  (CAV)}, 2017.

\bibitem[Katz et~al.(2019)Katz, Huang, Ibeling, Julian, Lazarus, Lim, Shah,
  Thakoor, Wu, Zelji{\'{c}}, Dill, Kochenderfer, and Barrett]{katz19acas_cert}
Guy Katz, Derek~A. Huang, Duligur Ibeling, Kyle Julian, Christopher Lazarus,
  Rachel Lim, Parth Shah, Shantanu Thakoor, Haoze Wu, Aleksandar Zelji{\'{c}},
  David~L. Dill, Mykel~J. Kochenderfer, and Clark Barrett.
\newblock The marabou framework for verification and analysis of deep neural
  networks.
\newblock In \emph{Computer Aided Verification}, 2019.

\bibitem[Krizhevsky(2009)]{Krizhevsky09learningmultiple}
Alex Krizhevsky.
\newblock Learning multiple layers of features from tiny images.
\newblock Technical report, 2009.

\bibitem[Kurakin et~al.(2016)Kurakin, Goodfellow, and
  Bengio]{kurakin16physical}
Alexey Kurakin, Ian~J. Goodfellow, and Samy Bengio.
\newblock Adversarial examples in the physical world.
\newblock \emph{CoRR}, abs/1607.02533, 2016.

\bibitem[Le and Yang(2015)]{Le2015TinyIV}
Y.~Le and X.~Yang.
\newblock Tiny imagenet visual recognition challenge.
\newblock 2015.

\bibitem[Lecuyer et~al.(2018)Lecuyer, Atlidakis, Geambasu, Hsu, and
  Jana]{lecuyer18smoothing}
Mathias Lecuyer, Vaggelis Atlidakis, Roxana Geambasu, Daniel Hsu, and Suman
  Jana.
\newblock Certified robustness to adversarial examples with differential
  privacy.
\newblock In \emph{Symposium on Security and Privacy (S\&P)}, 2018.

\bibitem[Lee et~al.(2020)Lee, Lee, and Park]{lee20local_margin}
Sungyoon Lee, Jaewook Lee, and Saerom Park.
\newblock Lipschitz-certifiable training with a tight outer bound.
\newblock In \emph{NIPS}, 2020.

\bibitem[Leino et~al.(2021)Leino, Wang, and Fredrikson]{leino21gloro}
Klas Leino, Zifan Wang, and Matt Fredrikson.
\newblock Globally-robust neural networks.
\newblock In \emph{ICML}, 2021.

\bibitem[Lopez et~al.()Lopez, Johnson, Tran, Bak, Chen, and
  Hobbs]{manzana21acas_cert}
Diego~Manzanas Lopez, Taylor Johnson, Hoang-Dung Tran, Stanley Bak, Xin Chen,
  and Kerianne~L. Hobbs.
\newblock Verification of neural network compression of acas xu lookup tables
  with star set reachability.
\newblock In \emph{AIAA Scitech 2021 Forum}.

\bibitem[{Papernot} et~al.(2016){Papernot}, {McDaniel}, {Jha}, {Fredrikson},
  {Celik}, and {Swami}]{PapernotMJFCS16}
N.~{Papernot}, P.~{McDaniel}, S.~{Jha}, M.~{Fredrikson}, Z.~B. {Celik}, and
  A.~{Swami}.
\newblock The limitations of deep learning in adversarial settings.
\newblock In \emph{European Symposium on Security and Privacy (EuroS\&P)},
  2016.

\bibitem[Raghunathan et~al.(2018)Raghunathan, Steinhardt, and
  Liang]{raghunathan2018certified}
Aditi Raghunathan, Jacob Steinhardt, and Percy Liang.
\newblock Certified defenses against adversarial examples.
\newblock 2018.

\bibitem[Shan et~al.(2020)Shan, Wenger, Zhang, Li, Zheng, and
  Zhao]{shan20evasion_privacy}
Shawn Shan, Emily Wenger, Jiayun Zhang, Huiying Li, Haitao Zheng, and Ben~Y.
  Zhao.
\newblock Fawkes: Protecting privacy against unauthorized deep learning models.
\newblock In \emph{{USENIX} Security Symposium}, 2020.

\bibitem[Sharif et~al.(2016)Sharif, Bhagavatula, Bauer, and
  Reiter]{sharif16glasses}
Mahmood Sharif, Sruti Bhagavatula, Lujo Bauer, and Michael~K. Reiter.
\newblock Accessorize to a crime: Real and stealthy attacks on state-of-the-art
  face recognition.
\newblock In \emph{ACM SIGSAC Conference on Computer and Communications
  Security (CCS)}, 2016.

\bibitem[Szegedy et~al.(2014)Szegedy, Zaremba, Sutskever, Bruna, Erhan,
  Goodfellow, and Fergus]{SzegedyZSBEGF13}
Christian Szegedy, Wojciech Zaremba, Ilya Sutskever, Joan Bruna, Dumitru Erhan,
  Ian~J. Goodfellow, and Rob Fergus.
\newblock Intriguing properties of neural networks.
\newblock In \emph{ICLR}, 2014.

\bibitem[Tjeng et~al.(2019)Tjeng, Xiao, and Tedrake]{tjeng19mip}
Vincent Tjeng, Kai~Y. Xiao, and Russ Tedrake.
\newblock Evaluating robustness of neural networks with mixed integer
  programming.
\newblock In \emph{International Conference on Learning Representations
  (ICLR)}, 2019.

\bibitem[Tram{\`{e}}r and Boneh(2019)]{tramer19multiple_norms}
Florian Tram{\`{e}}r and Dan Boneh.
\newblock Adversarial training and robustness for multiple perturbations.
\newblock In \emph{NIPS}, 2019.

\bibitem[Wong et~al.(2018)Wong, Schmidt, Metzen, and
  Kolter]{NEURIPS2018_358f9e7b}
Eric Wong, Frank Schmidt, Jan~Hendrik Metzen, and J.~Zico Kolter.
\newblock Scaling provable adversarial defenses.
\newblock In \emph{NIPS}, 2018.

\bibitem[Xiao et~al.(2019)Xiao, Tjeng, Muhammad~Shafiullah, and
  Madry]{xiao19relu_stability}
Kai Xiao, Vincent Tjeng, Nur Muhammad~Shafiullah, and Aleksander Madry.
\newblock Training for faster adversarial robustness verification via inducing
  relu stability.
\newblock In \emph{International Conference on Learning Representations
  (ICLR)}, 2019.

\bibitem[Zhang et~al.(2018)Zhang, Weng, Chen, Hsieh, and Daniel]{zhang18crown}
Huan Zhang, Tsui-Wei Weng, Pin-Yu Chen, Cho-Jui Hsieh, and Luca Daniel.
\newblock Efficient neural network robustness certification with general
  activation functions.
\newblock In \emph{NIPS}, 2018.

\end{thebibliography}

\newpage

\appendix


\setcounter{theorem}{0}

\maketitletwo

\section{Discussion of the Method Proposed by Jia et al.}
\label{appendix:flop_k}


In recent work, \citet{jia20flop_k} also proposed an approach that aims to capture certified robustness for top-$k$ predictions.
In this section, we provide details on differences between this work and ours; particularly the shortcomings of this approach that our work addresses.

Recall our notation from Section~\ref{sec:rtk}, where, for neural network, $f$, we define $F^k(x)$ as the set of classes corresponding to the top $k$ logit values in $f(x)$.
The approach of \citeauthor{jia20flop_k} provides a \emph{probabilistic} bound on the radius, $r^k_j(x)$, under which a given class $j\in F^k(x)$ will remain in the top-$k$ predictions.
That is, \citeauthor{jia20flop_k} provide a probabilistic guarantee that Equation~\ref{app:eq:flop_k} holds.
\begin{equation}
\label{app:eq:flop_k}
\forall x'~~.~~||x - x'|| \leq r^k_j(x)~\Longrightarrow~j\in F^k(x')
\end{equation}
In their evaluation, \citeauthor{jia20flop_k} consider a point, $x$, to be \emph{certified} at radius, $\epsilon$, if $r^k_{y^*} \geq \epsilon$, where $y^*$ is the \emph{ground truth} label for $x$.
This presents a problem for certifying unseen points as the ground truth cannot be known.
We therefore stipulate that certification must be independent of the true label of the point being certified.
Moreover, replacing the ground truth with the predicted label is unsatisfactory, because the purpose of generalizing to top-$k$ predictions is to consider cases where \emph{any} of the predictions in $F^k(x)$ may be correct.

While \citeauthor{jia20flop_k} do not address this issue, one straightforward adaptation of their approach is to take the minimum certified radius over all classes in $F^k(x)$.
That is, let $r^k(x) = \min_{j\in F^k(x)}\{r^k_j(x)\}$.
We see that this leads to a natural guarantee given by Equation~\ref{app:eq:fixed_flop_k}, which can be certified without knowledge of the ground truth class.
In short, Equation~\ref{app:eq:fixed_flop_k} holds because by taking the minimum $r^k_j$, we are guaranteed that \emph{all} of the top $k$ classes will remain in the top $k$ classes under perturbations bounded by $r^k(x)$.
\begin{equation}
\label{app:eq:fixed_flop_k}
\forall x'~~.~~||x - x'|| \leq r^k(x)~\Longrightarrow~F^k(x) = F^k(x')
\end{equation}
We note however, that when certification is determined by comparing $r^k(x)$ to a fixed radius, Equation~\ref{app:eq:fixed_flop_k} is equivalent to our proposed definition for \emph{top-$k$ robustness} (Definition~\ref{def:top-k}).
As discussed in Section~\ref{sec:rtk}, this is also \emph{not} a satisfactory robustness analogue to top-$k$ accuracy as it does not relax local robustness.
We therefore argue that \emph{RTK robustness} (Definition~\ref{def:rtk}) should be used to certify the robustness of top-$k$ predictions.

\section{Construction of Affinity GloRo Nets}
\label{appendix:affinity}


In this section we describe how to produce networks that incorporate certifiable affinity robustness into their training objectives.
As with the construction for RTK GloRo Nets in Section~\ref{sec:impl}, we follow a similar approach to \citet{leino21gloro}, instrumenting the output of a neural network to return an added class, $\bot$, in cases where affinity robustness cannot be certified.
To this end, we propose \emph{Affinity GloRo Nets}, which naturally satisfy affinity robustness on all non-rejected points.


As defined in Section~\ref{sec:rtk}, let $F$ be the predictions made by $f$, i.e., $F(x) = \argmax_i\{f_i(x)\}$, and let $F^k(x)$ be the set of the top $k$ predictions made by $f$ on $x$; 
and as above, let $\mathcal K_{ji}$ be the Lipschitz constant of $f_j - f_i$.
Furthermore, let $\mathcal S$ be a collection of affinity sets, and let $K = \max_{S\in\mathcal S}\{|S|\}$; that is, $K$ is the size of the largest affinity set in $\mathcal S$.

For $k \leq K$ and $j\in F^k(x)$, let $m^k_j(x) = f_j(x) - \max_{i \notin F^k(x)}\{f_i(x) + \epsilon \mathcal K_{ji}\}$.
Intuitively, $m^k_j(x)$ is the margin by which class $j$, which is in the top $k$ classes, exceeds every class not in the top $k$ classes, after accounting for the maximum change in logit values within a radius of $\epsilon$ determined by the Lipschitz constant.
We observe that if $m^k_j(x) > 0$ then the logit for class $j$, $f_j(x)$, cannot be surpassed within the $\epsilon$-ball around $x$ by an output not in the top $k$ outputs, $f_i(x)$ for $i \notin F^k(x)$.

Next, let $m^k(x) = \min_{j\in F^k(x)}\{m^k_j(x)\}$.
This represents the minimum margin by which \emph{any} class in the top $k$ classes exceeds every class not in the top $k$ classes; thus if $m^k(x) > 0$, then $F$ is top-$k$ robust at $x$.

Finally, let $m(\mathcal S, x)$ be given by Equation~\ref{app:eq:mx_aff}.
Essentially, we restrict our consideration of sets of top-$k$ predictions to those that are constrained to a single affinity set.
Among the considered sets, we take the maximum margin by which every class in the set will surpass every class not in the set under bounded perturbations to $x$.
\begin{equation}
m(\mathcal S, x) = \max_{k~:~\exists S\in\mathcal S~:~F^k(x)\subseteq S}\Big\{~
	m^k(x)
~\Big\}
\label{app:eq:mx_aff}
\end{equation}
We note that in practice, the maximum in Equation~\ref{app:eq:mx_aff} can be computed efficiently by representing the sets $F^k(x)$ and $S$ as bit maps and masking out rows of $m^k$ that correspond to values of $k$ for which $F^k(x) \cap S \neq F^k(x)$ for all $S\in\mathcal S$.

We observe that if $m(\mathcal S, x) > 0$, then the model is affinity robust at $x$.
We would thus like to predict $\bot$ only when $m(\mathcal S, x) < 0$.
To accomplish this we create an instrumented model, $g$, as given by Equation~\ref{app:eq:aff_gloro}.
\begin{equation}
\label{app:eq:aff_gloro}
g_i(x) = f_i(x);~~ g_\bot(x) = \max_i\{f_i(x) - m(\mathcal S, x)\}
\end{equation}

\section{Proofs}
\label{appendix:proofs}


\subsection{Correctness of RTK GloRo Nets}
\label{app:sec:rtk_proof}

\begin{theorem}
Let $g$ be an RTK GloRo Net as defined by Equation~\ref{eq:rtk_gloro}. 
Then, if the maximal output of $g(x)$ is not $\bot$, then $F$ is RTK $\epsilon$-locally-robust at $x$.
\end{theorem}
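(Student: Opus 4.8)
\emph{Proof sketch (proposal).} The plan is to extract from the hypothesis ``$g(x)$ does not output $\bot$'' a single index $k^\ast \le K$ that witnesses the $\exists\,k \le K$ clause of Definition~\ref{def:rtk} uniformly over the ball; concretely, I will show $F^{k^\ast}(x') = F^{k^\ast}(x)$ for every $x'$ with $\|x - x'\| \le \epsilon$, i.e.\ top-$k^\ast$ robustness in the sense of Definition~\ref{def:top-k}, which is strictly stronger than what Definition~\ref{def:rtk} requires. First I would unpack the hypothesis using \eqref{eq:rtk_gloro}: since $g_i(x) = f_i(x)$ for every class $i$ and $g_\bot(x) = \max_i f_i(x) - m(x)$, the output $\bot$ fails to be maximal exactly when $m(x) > 0$. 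Recalling $m(x) = \max_{k \le K} m^k(x)$, fix $k^\ast \le K$ attaining this maximum (taking the largest such index, which only matters for the tie discussion below); then $m^{k^\ast}(x) = m(x) > 0$, and since $m^{k^\ast}(x) = \min_{j \in F^{k^\ast}(x)} m^{k^\ast}_j(x)$, every $j \in F^{k^\ast}(x)$ satisfies $m^{k^\ast}_j(x) > 0$.

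The core step converts these positive margins into a separation that survives any $\epsilon$-perturbation. Unfolding $m^{k^\ast}_j(x) = f_j(x) - \max_{i \notin F^{k^\ast}(x)}\{f_i(x) + \epsilon K_{ji}\} > 0$ gives $f_j(x) - f_i(x) > \epsilon K_{ji}$ for all $j \in F^{k^\ast}(x)$ and all $i \notin F^{k^\ast}(x)$. For any $x'$ with $\|x-x'\| \le \epsilon$, the fact that $K_{ji}$ is a Lipschitz constant of $f_j - f_i$ yields $(f_j - f_i)(x') \ge (f_j - f_i)(x) - K_{ji}\|x - x'\| \ge (f_j - f_i)(x) - \epsilon K_{ji} > 0$. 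Hence at every point of the $\epsilon$-ball, every class in $F^{k^\ast}(x)$ strictly outscores every class outside $F^{k^\ast}(x)$.

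From this separation I would conclude $F^{k^\ast}(x') = F^{k^\ast}(x)$: the classes of $F^{k^\ast}(x)$ occupy the top positions of $f(x')$, so when $|F^{k^\ast}(x)| = k^\ast$ they are precisely the top $k^\ast$ logits of $f(x')$ and the two sets coincide; RTK robustness then follows by instantiating $k = k^\ast$ in Definition~\ref{def:rtk}. I expect the one place needing care to be exact ties in the logits, i.e.\ $s := |F^{k^\ast}(x)| > k^\ast$: then $f^{k^\ast}(x) = f^{k^\ast+1}(x) = \dots = f^{s}(x)$, so $F^{s}(x) = F^{k^\ast}(x)$ and therefore $m^{s}(x) = m^{k^\ast}(x) > 0$; choosing $k^\ast$ maximal, together with the mild non-degeneracy assumption that the $K$-th and $(K+1)$-th largest logits of $f(x)$ differ (so that $F^{k^\ast}(x) \subseteq F^K(x)$ has at most $K$ elements and $s \le K$) — which holds generically and is harmless in a floating-point implementation — forces $s = k^\ast$ and reduces everything to the clean case. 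The remainder is just the chain of the Lipschitz inequality and the definitions of $m^k_j$, $m^k$ and $m$, so I anticipate no further obstacle.
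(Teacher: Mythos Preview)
Your proposal is correct and follows essentially the same route as the paper: from $g$ not predicting $\bot$ deduce $m(x)>0$, pick a witnessing $k^\ast\le K$, expand $m^{k^\ast}(x)>0$ into the per-pair margin $f_j(x)-f_i(x)>\epsilon K_{ji}$, and apply the Lipschitz bound on $f_j-f_i$ to carry the strict separation across the $\epsilon$-ball, yielding $F^{k^\ast}(x')=F^{k^\ast}(x)$. The only noteworthy difference is the treatment of logit ties: the paper simply assumes without loss of generality that the logit values are distinct, whereas you spell out a reduction (choose $k^\ast$ maximal, invoke a non-degeneracy assumption at rank $K$) that arrives at the same place; both are equally informal on this edge case and neither adds or removes any real content.
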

\begin{proof}
Let $y = F(x) = \argmax_i\{f_i(x)\}$.
Assume that the maximal output of $g(x)$ is not $\bot$, i.e., $\exists i$ such that $g_\bot(x) < g_i(x) \leq g_y(x) = f_y(x)$.
By the definition of $g_\bot$ in Equation~\ref{eq:rtk_gloro}, we obtain (\ref{app:eq:thm1:setp1}).
By the definition of $y$, we obtain (\ref{app:eq:thm1:setp2}).
\begin{align}
f_y(x)
&> \max_i\{f_i(x) - m(x)\} \label{app:eq:thm1:setp1}\\
&= f_y(x) - m(x) \label{app:eq:thm1:setp2}
\end{align}
Thus, we have that $m(x)$ is positive.
As $m(x)$ is defined as $\max_{k\leq K}\{m^k(x)\}$, this means that there exists some $k^* \leq K$ such that $m^{k^*}(x) > 0$ (\ref{app:eq:thm1:setp3}).

We recall from the definition of RTK robustness, we must show that there exists some $k\leq K$ such that for all $x'$ at distance no greater than $\epsilon$ from $x$, $F^k(x) = F^k(x')$.
We proceed to show that $k^*$ is such a $k$; i.e., $||x - x'|| \leq \epsilon~\Longrightarrow~F^{k^*}(x) = F^{k^*}(x')$.

From (\ref{app:eq:thm1:setp3}) we expand the definition of $m^{k}(x)$ to obtain (\ref{app:eq:thm1:setp4}); and the definition of $m^{k}_j$ to obtain (\ref{app:eq:thm1:setp5}).
\begin{align}
0 &< m^{k^*}(x) 
	\label{app:eq:thm1:setp3} \\
&= \min_{j\in F^{k^*}(x)}\left\{m^{k^*}_j(x)\right\}
	\label{app:eq:thm1:setp4} \\
&= \min_{j\in F^{k^*}(x),~i \notin F^{k^*}(x)}\Bigg\{
		f_j(x) - f_i(x) - \epsilon \mathcal K_{ji}
	\Bigg\}
	\label{app:eq:thm1:setp5} 
\end{align}
We observe that (\ref{app:eq:thm1:setp5}) implies (\ref{app:eq:thm1:setp6}).
\begin{equation}
\forall j\in F^{k^*}(x),~i\notin F^{k^*}(x)~~.~~
	f_i(x) + \epsilon \mathcal K_{ji} < f_j(x)
\label{app:eq:thm1:setp6}
\end{equation}

Next, we assume $x'$ satisfies $||x - x'|| \leq \epsilon$.
As $\mathcal K_{ji}$ is an upper bound on the Lipschitz constant of $f_j - f_i$ we obtain (\ref{app:eq:thm1:step_apply_lipschitz}).
\begin{align}\label{app:eq:thm1:step_apply_lipschitz}
	& \frac{|f_j(x) - f_i(x) - (f_j(x') - f_i(x'))|}{||x - x'||} \leq \mathcal K_{ji} \nonumber \\
    \Longrightarrow~~& |f_j(x) - f_i(x) - (f_j(x') - f_i(x'))| \leq \mathcal K_{ji} \epsilon
\end{align}
Thus we argue as follows for all $j\in F^{k^*}(x)$ and $i\notin F^{k^*}(x)$.
First, by applying (\ref{app:eq:thm1:step_apply_lipschitz}), we obtain (\ref{app:eq:thm1:setp7}).
Then, by applying (\ref{app:eq:thm1:setp6}) we obtain (\ref{app:eq:thm1:setp8}).
\begin{align}
& f_i(x) + f_j(x) - f_i(x) - f_j(x') + f_i(x') \nonumber \\
\leq~& f_i(x) + |f_j(x) - f_i(x) - f_j(x') + f_i(x')| \nonumber \\
\leq~& f_i(x) + \epsilon \mathcal K_{ji} \label{app:eq:thm1:setp7}\\
<~& f_j(x) \label{app:eq:thm1:setp8}
\end{align}
By rearranging the terms in the above inequality, we obtain (\ref{app:eq:thm1:setp9}). 
\begin{equation}
\forall j\in F^{k^*}(x),~i\notin F^{k^*}(x)~~.~~f_i(x') < f_j(x')
\label{app:eq:thm1:setp9}
\end{equation}
Finally, we realize that (\ref{app:eq:thm1:setp9}) is equivalent to $F^{k^*}(x) = F^{k^*}(x')$.
To see why, consider the following.
Let $Z = \{\forall i~.~f_i(x')\}$ be the set of all logit values produced by $f$ on $x'$ (assume WLOG each logit value is unique), and let $Z_{k^*} = \{\forall j\in F^{k^*}(x)~.~f_j(x')\}$ be the subset of $Z$ containing the logit values of $f(x')$ corresponding to the classes in $F^{k^*}(x)$.
By (\ref{app:eq:thm1:setp9}) we have that for all $z \in Z_{k^*}$ and $z'\in Z\setminus Z_{k^*}$, $z > z'$.
Thus, $Z_{k^*}$ contains the top $k^*$ elements of $Z$.

Putting everything together, we conclude that $\exists k\leq K~:~||x - x'|| \leq \epsilon~\Longrightarrow~F^{k}(x) = F^{k}(x')$.
\end{proof}

\subsection{Correctness of Affinity GloRo Nets}

\begin{theorem}
Let $g$ be an Affinity GloRo Net as defined by Equation~\ref{app:eq:aff_gloro}. 
Then, if the maximal output of $g(x)$ is not $\bot$, then $F$ is affinity $\epsilon$-locally-robust at $x$.
\end{theorem}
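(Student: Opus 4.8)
The plan is to mirror the proof of correctness for RTK GloRo Nets (Appendix~\ref{app:sec:rtk_proof}) almost verbatim, inserting one extra bookkeeping step that tracks the affinity-set containment constraint. First I would set $y = F(x) = \argmax_i\{f_i(x)\}$ and assume the maximal output of $g(x)$ is not $\bot$; by the definition of $g_\bot$ in Equation~\ref{app:eq:aff_gloro} together with $f_y(x) = \max_i\{f_i(x)\}$, this forces $m(\mathcal S, x) > 0$, exactly as in the RTK case.

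Next I would unfold the definition of $m(\mathcal S, x)$ in Equation~\ref{app:eq:mx_aff}. Since it is a maximum of $m^k(x)$ taken over precisely those $k$ for which $\exists S \in \mathcal S$ with $F^k(x) \subseteq S$, positivity of $m(\mathcal S, x)$ hands us a witness $k^*$ that \emph{simultaneously} satisfies (i) $m^{k^*}(x) > 0$ and (ii) $F^{k^*}(x) \subseteq S^*$ for some $S^* \in \mathcal S$. Property (i) is exactly the hypothesis that drives the RTK argument, so I would reuse that reasoning unchanged: expand $m^{k^*}(x) = \min_{j \in F^{k^*}(x)}\{m^{k^*}_j(x)\}$, then expand $m^{k^*}_j$, apply the Lipschitz bound on $f_j - f_i$ over the $\epsilon$-ball around $x$, and rearrange, to conclude that for every $x'$ with $||x - x'|| \leq \epsilon$ and every $j \in F^{k^*}(x)$, $i \notin F^{k^*}(x)$ one has $f_i(x') < f_j(x')$, which is equivalent to $F^{k^*}(x) = F^{k^*}(x')$.

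Combining this with property (ii) finishes the argument: for all $x'$ with $||x - x'|| \leq \epsilon$ there exist $k = k^* \in \mathbb{N}$ and $S = S^* \in \mathcal S$ with $F^k(x) = F^k(x')$ and $F^k(x) \subseteq S$, which is exactly affinity $\epsilon$-local-robustness in the sense of Definition~\ref{def:affinity}. The only content beyond the RTK proof is the observation that the $\argmax$ defining $m(\mathcal S, x)$ ranges only over affinity-set-respecting values of $k$, so the same $k^*$ that certifies top-$k^*$ robustness automatically certifies the containment requirement. I do not anticipate a genuine obstacle, since the top-$k$ robustness machinery is entirely inherited; the one subtlety worth a sentence is the degenerate case where \emph{no} $k$ satisfies the constraint in Equation~\ref{app:eq:mx_aff}, but then $m(\mathcal S, x)$ is vacuously non-positive, $g$ rejects $x$, and the hypothesis of the theorem already excludes this situation.
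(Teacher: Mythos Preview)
Your proposal is correct and follows essentially the same approach as the paper's proof: establish $m(\mathcal S, x) > 0$ from the non-$\bot$ assumption, extract a witness $k^*$ (with its accompanying $S^*$) from the definition of $m(\mathcal S, x)$, and then defer entirely to the RTK argument to conclude $F^{k^*}(x) = F^{k^*}(x')$. Your extra remark about the degenerate empty-index-set case is a nice clarification that the paper leaves implicit.
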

\begin{proof}
The proof follows a similar approach to the proof of Theorem~\ref{thm:rtk_gloro}.
Let $y = F(x) = \argmax_i\{f_i(x)\}$.
Assume that the maximal output of $g(x)$ is not $\bot$, i.e., $\exists i$ such that $g_\bot(x) < g_i(x) \leq g_y(x) = f_y(x)$.
By the definition of $g_\bot$ in Equation~\ref{app:eq:aff_gloro}, we obtain (\ref{app:eq:thm2:setp1}).
By the definition of $y$, we obtain (\ref{app:eq:thm2:setp2}).
\begin{align}
f_y(x)
&> \max_i\{f_i(x) - m(\mathcal S, x)\} \label{app:eq:thm2:setp1}\\
&= f_y(x) - m(\mathcal S, x) \label{app:eq:thm2:setp2}
\end{align}
Thus, we have that $m(\mathcal S, x)$ is positive.
From the definition of $m(\mathcal S, x)$ in Equation~\ref{app:eq:mx_aff}, we conclude that there exists some $k^*$ and some $S^*\in\mathcal S$, such that $F^{k^*}(x)\subseteq S^*$ and $m^{k^*}(x) > 0$.

We recall from the definition of affinity robustness, we must show that there exists some $k$ and $S$ such that for all $x'$ at distance no greater than $\epsilon$ from $x$, $F^k(x) = F^k(x')$ and $F^k(x)\subseteq S$.
We proceed to show that $||x - x'|| \leq \epsilon~\Longrightarrow~F^{k^*}(x) = F^{k^*}(x')~\land~F^{k^*}(x)\subseteq S^*$.

From our observations above, we have that $F^{k^*}(x)\subseteq S^*$, therefore it suffices to simply show that $||x - x'|| \leq \epsilon~\Longrightarrow~F^{k^*}(x) = F^{k^*}(x')$, given that $m^{k^*}(x) > 0$.
As $m^k(x)$ is defined the same for both Affinity GloRo Nets (Equation~\ref{app:eq:aff_gloro}) and RTK GloRo Nets (Equation~\ref{eq:rtk_gloro}), the remainder of the proof proceeds exactly as the proof for Theorem~\ref{thm:rtk_gloro} in Section~\ref{app:sec:rtk_proof}.
\end{proof}

\section{Relaxed Robustness Guarantees on ACAS Xu}
\label{appendix:acas}


We provide further motivation for Affinity GloRo Nets by showing how they can be leveraged to efficiently certify a previously-studied safety property for ACAS Xu~\cite{julian16acas}, a classification problem that has been a primary motivation in many prior works that study certification of neural network safety properties~\cite{katz17reluplex,katz19acas_cert,manzana21acas_cert,gopinath18targeted}.
Specifically, we show that Affinity GloRo Nets can certify \emph{targeted safe regions}~\cite{gopinath18targeted} in a single forward pass of the network, while previous techniques based on formal methods require hours to certify this property, even on smaller networks~\cite{gopinath18targeted}.
We present these experiments here.

ACAS Xu is an airborne collision avoidance system for unmanned aircraft that has been studied in the context of neural network safety certification.
The classification task for ACAS Xu is as follows.
Given a few features, e.g., altitude, velocity, etc., the network produces a horizontal maneuver advisory for the aircraft, instructing it on how to turn to avoid a collision.
This advisory comes from one of the following options:
(1) hard left, (2) left, (3) clear of conflict (i.e., go straight), (4) right, or (5) hard right.

Access to the ACAS Xu dataset is not public.
However, many trained networks that have been certified for other safety properties specific to ACAS Xu (and unrelated to robustness) are publicly available;
we generated a synthetic dataset derived from the predictions of one such public model provided by \citet{katz17reluplex}.
To create this dataset, we generated random inputs clipped to be within the standard range for each input~\cite{katz17reluplex} and labeled them using a publicly-available pretrained ACAS Xu network.

We chose our value for $\epsilon$ by estimating the minimum $\ell_2$ distance between any two points with different labels.
This value was approximately $0.02$ on our synthetic dataset; thus we used $\epsilon = 0.01$, as $\epsilon$-local-robustness requires a separation of $2\epsilon$ between classes.

Previously, \citet{gopinath18targeted} proposed certification of \emph{targeted safe regions} on ACAS Xu.
An $\ell_p$ ball with radius $\epsilon$, centered at $x$, is considered \emph{targeted safe} if the horizontal maneuver advisory does not change within the ball except to either one degree further left or one degree further right.
E.g., an $\ell_p$ ball may contain the directives ``left'' and ``hard left'' or ``left'' and ``clear of conflict.''
We note that this property can be captured by affinity robustness, where the affinity sets are simply all pairs of adjacent directives.

Table~\ref{app:tab:acas} presents the results of training and evaluating an Affinity GloRo Net on our synthetic ACAS Xu dataset.
In particular, we give the VRA (as the fraction of points that are correctly classified and affinity robust), the rejection rate (as the fraction of points that cannot be certified as affinity robust), and the clean accuracy.
\citeauthor{gopinath18targeted} do not present any directly-comparable metrics to these in their evaluation.
Rather than presenting rejection rates for a fixed epsilon, they instead attempt to determine the minimum radius under which the property holds, and they present the average such radius.
It is therefore unclear what fraction of points would be accepted under any fixed radius.
However, $16\%$ of the points timed out after 12 hours, meaning that \emph{at least} $16\%$ of points were unable to be certified.
This is comparable to the $19\%$ rejection rate obtained by our Affinity GloRo Net.
Moreover, the points that were successfully certified by \citeauthor{gopinath18targeted} took, on average, $7.6$ hours to certify; by comparison, our approach certifies points in a single forward pass of the network, meaning \emph{an entire batch} can be certified \emph{in a matter of milliseconds}.

\begin{table}
\resizebox{\textwidth}{!}{%
\begin{tabular}{llc|ccc}
\toprule
\textit{dataset}\hspace{3em} & \textit{guarantee}\hspace{3em} & $\epsilon$ & \hspace{4em}\textit{VRA}\hspace{4em} & \textit{rejection rate} & \textit{clean accuracy} \\ 
\midrule
ACAS Xu & targeted affinity & 0.010 & 
	0.749~~$\pm$~~0.001 & 0.195~~$\pm$~~0.001 & 0.858~~$\pm$~~0.002 \\ 
\bottomrule
\end{tabular}
}
\vspace{1em}
\caption{
	Certification results under affinity robustness on ACAS Xu. 
	VRA is given as the fraction of points that are correctly classified and affinity robust.
	Results are taken as the average over 10 runs; standard deviations are denoted by $\pm$.
}
\label{app:tab:acas}
\end{table}

\section{Details on Hyperparameters}
\label{appendix:hyperparams}


In this section we provide the details on the architecture, training procedure, hyperparameters, etc., used to train the models in our evaluation.

\paragraph{Hardware.}
All experiments were run on an NVIDIA TITAN RTX GPU with 24 GB of RAM, and a
4.2GHz Intel Core i7-7700K with 32 GB of RAM.

\paragraph{Data Splits.}
On CIFAR-100 and Tiny-Imagenet, we used the standard train-test split.
We are unaware of any ``standard'' train-test split for EuroSAT; thus we used $\nicefrac{2}{3}$ of the data for the training set and $\nicefrac{1}{3}$ of the data for the test set.

\paragraph{Architectures.}
For CIFAR-100 and EuroSAT we used a simple convolutional architecture consisting of two blocks with width 32 and 64 respectively---each containing a convolutional layer with $3\times 3$ followed by a down-sampling layer---followed by two dense hidden layers with width 256 each.
For Tiny-Imagenet we used the same architecture as was used previously by \citet{lee20local_margin} and subsequently \citet{leino21gloro}.
This architecture consists of three convolutional blocks followed by a dense layer of width 256.
The first block is composed of two convolutional layers with $3\times3$ filters and 64 channels followed by a strided convolution with $4\times4$ filters and 64 channels.
The first block is the same as the first block, but with a width of 128.
The third block has one convolutional layer with $3\times3$ filters and 256 channels followed by a strided convolution with $4\times4$ filters and 256 channels.
For ACAS Xu, we used a dense network consisting of three hidden layers with 1,000 neurons each.
In all networks, we used \emph{min-max} activations~\cite{anil19minmax} rather than ReLU activations, as these have been observed to achieve better performance with GloRo Nets~\cite{leino21gloro}.
Similarly, down-sampling in the CIFAR-100 and EuroSAT models was achieved via \emph{invertible down-sampling}~\cite{anil19minmax} rather than max-pooling.

\paragraph{Hyperparameters.}
Details on the hyperparameters used to train each model presented in the evaluation in Section~\ref{sec:eval} are given in Table~\ref{app:tab:hyperparams}.
All models were trained using the ADAM optimizer.

As briefly discussed in Section~\ref{sec:impl}, GloRo Nets (and our variations thereof) make use of the underlying model's Lipschitz constant, which is approximated using the power method (see \cite{leino21gloro} for more details).
Prior to evaluation, the power method is run to convergence, however, during training we run a fixed number of iterations on each batch.
For each of the models in our evaluation we used two power iterations on each training batch.

We used a continuous learning rate schedule, where the learning rate was adjusted on each epoch.
A description of the learning rate schedule is given in Table~\ref{app:tab:hyperparams}.
E.g., learning rate schedule of ``$10^{-3} \to 10^{-6}$ after half'' means that the learning rate begins at $10^{-3}$ and halfway through training the learning rate decays exponentially to reach $10^{-6}$ on the final epoch.

For each model, we used one of two loss functions adapted for GloRo Nets as proposed by \citet{leino21gloro}, specified in the ``loss function'' column of Table~\ref{app:tab:hyperparams}: \emph{cross-entropy} or \emph{TRADES}
(see \cite{leino21gloro} for more details on these loss functions).
The TRADES loss function takes an additional hyperparameter, $\lambda$, which was scheduled during training.
Where applicable, the ``\textit{TRADES schedule}'' column describes how this parameter was scheduled over training.
For example, ``$0.01 \to 1.2$ log half'' means that $\lambda$ was initialized to $0.1$ and was increased logarithmically each epoch (i.e., it increases at a decreasing rate) to reach a final value of $1.2$ halfway through training; and ``$1.0 \to 1.2$'' means that $\lambda$ was initialized to $1.0$ and was increased linearly to reach $1.2$ on the final epoch.

\begin{table}
\resizebox{\textwidth}{!}{%
\begin{tabular}{ll|ccccc}
\toprule
\textit{dataset} & \textit{guarantee} & 
	\textit{epochs} & \textit{batch size} & \textit{learning rate schedule} & \textit{loss function} & \textit{TRADES schedule} \\
\midrule
EuroSAT & standard & 
	200 & 256 & $10^{-3} \to 10^{-6}$ after half & TRADES & $0.01 \to 1.2$ log half \\
EuroSAT & RT3 & 
	200 & 256 & $10^{-3} \to 10^{-6}$ after half & TRADES & $1.0 \to 1.2$ \\
EuroSAT & highway+river affinity & 
	200 & 256 & $10^{-3} \to 10^{-6}$ after half & cross-entropy & {\small N/A} \\
EuroSAT & highway+river+agriculture affty. & 
	200 & 256 & $10^{-3} \to 10^{-6}$ after half & cross-entropy & {\small N/A}\\
\midrule
CIFAR-100 & standard & 
	200 & 256 & $10^{-3} \to 10^{-6}$ after half & TRADES & $0.01 \to 1.2$ log half \\
CIFAR-100 & RT5 & 
	200 & 256 & $10^{-3} \to 10^{-6}$ after half & TRADES & $0.01 \to 1.2$ log half \\
CIFAR-100 & superclass affinity & 
	200 & 256 & $10^{-3} \to 10^{-6}$ after half & TRADES & $0.01 \to 1.2$ log half \\
\midrule
Tiny-Imagenet & RT5 & 
	200 & 256 & $2.5\cdot10^{-5} \to 5\cdot10^{-6}$ after half & TRADES & $1.0 \to 10.0$ half\\
\midrule
ACAS Xu & targeted affinity & 
	100 & 128 & $10^{-3} \to 5\cdot10^{-6}$ after half & cross-entropy & {\small N/A}\\
\bottomrule
\end{tabular}}
\vspace{1em}
\caption{
	Hyperparameters for each model used in the evaluation in Section~\ref{sec:eval}.
}
\label{app:tab:hyperparams}
\end{table}

\newpage

\section{Further Results}
\label{appendix:more_results}


\begin{figure}[t]
\centering
\resizebox{\textwidth}{!}{%
\includegraphics{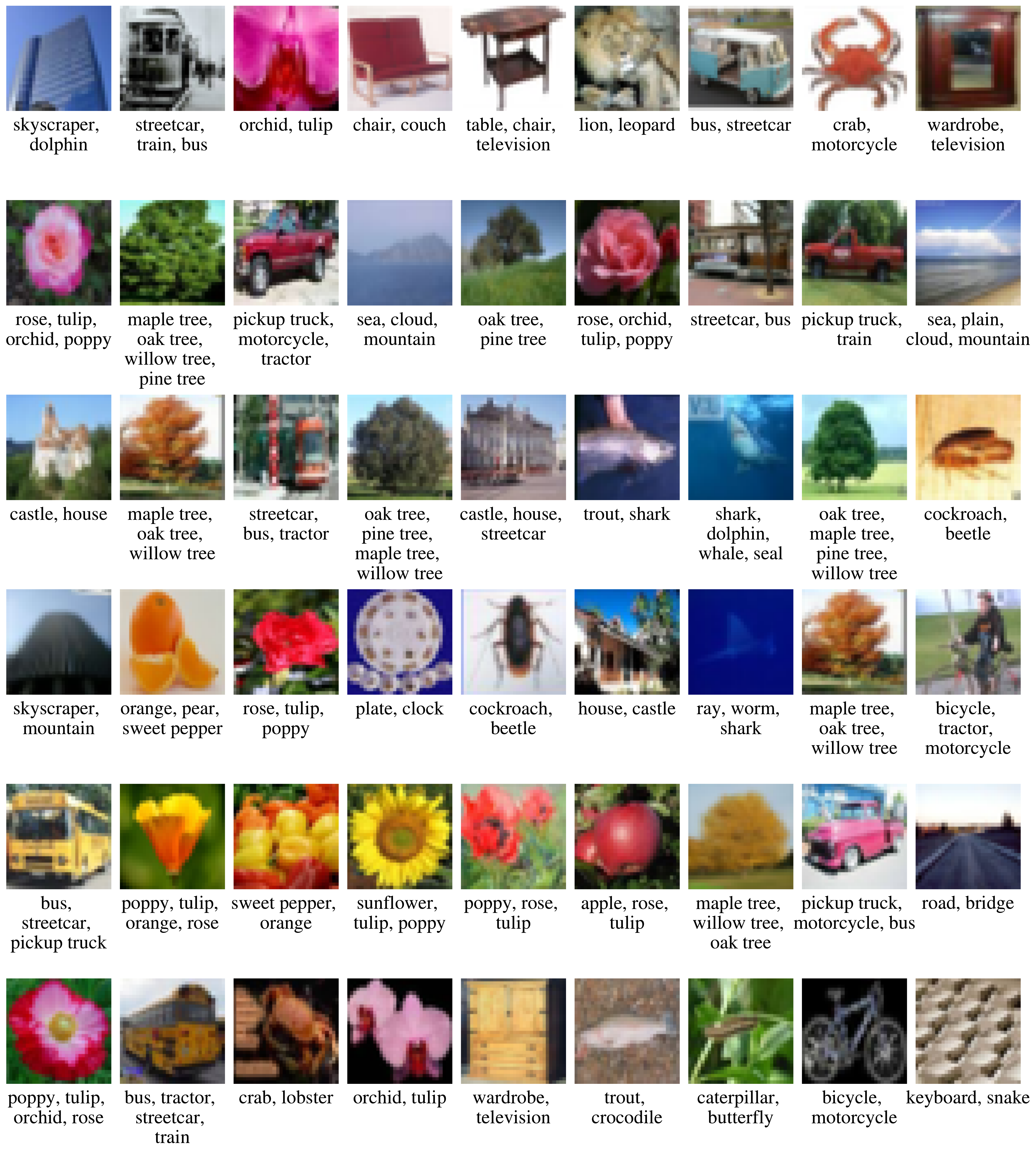}}
\vspace{-1.5em}
\caption{
	Random samples of CIFAR-100 instances that are both correctly classified and certified as top-$k$ robust for $k > 1$.
	The classes included in the RT5 robustness guarantee are given beneath each image.
}
\vspace{-0.5em}
\label{app:fig:cifar_examples}
\end{figure}

In Section~\ref{sec:eval:cifar}, we provide a few selected samples of correctly-classified, RT5-certifiable points with their corresponding robust prediction sets (Figure~\ref{fig:cifar_examples}).
Figure~\ref{app:fig:cifar_examples} provides a larger random sample of such examples, illustrating the types of non-top-1 guarantees that are typically achieved by an RT5 GloRo Net on CIFAR-100.

\end{document}